\newtheorem{theorem}{Theorem}[section]
\newtheorem{definition}[theorem]{Definition}
\newtheorem{lemma}[theorem]{Lemma}
\newcommand{\lemref}[1]{Lemma~\ref{#1}}
\newcommand{\thmref}[1]{Theorem~\ref{#1}}
\newcommand{\A}{\mathcal{A}}
\newcommand{\X}{\mathcal{X}}\newcommand{\Y}{\mathcal{Y}}\newcommand{\Z}{\mathcal{Z}}
\newcommand{\E}{\mathop{\mathbb{E}}}
\newcommand{\N}{\mathbb{N}}\newcommand{\R}{\mathbb{R}}\newcommand{\maps}{\colon}
\newcommand{\F}{\mathcal{F}}
\newcommand{\defref}[1]{Definition~\ref{#1}}
\renewcommand{\P}{\mathop{\mathbb{P}}}
\newcommand{\pa}{{\rm pa}}\newcommand{\nd}{{\rm nd}}
\newcommand{\keyword}[1]{\textbf{#1}}
\newcommand{\approximate}[2]{\alpha_{#2}(#1)}
\newcommand{\mc}[1]{\mathcal{#1}}
\newcommand{\nb}[2]{}
\newcommand{\set}[1]{\{#1\}}
\newcommand{\tuple}[1]{\langle#1\rangle}
\newcommand{\define}[1]{\begin{definition}#1\end{definition}}
\newcommand{\items}[1]{\begin{compactitem}#1\end{compactitem}}
\newcommand{\mysssection}[1]{\noindent\textbf{#1}\hspace{8pt}}
\newcommand{\hide}[1]{}
\newcommand{\reminder}[1]{{\color{red} *** #1 *** }}
\newcommand{\indict}{\mathbf{1}}
\newcommand{\unif}{\text{ Unif}}
\icmltitlerunning{Discrete-Continuous Mixtures in Probabilistic Programming}
\begin{document}

%

%

\twocolumn[
\icmltitle{Discrete-Continuous Mixtures in Probabilistic Programming:\\
Generalized Semantics and Inference Algorithms} 



\icmlsetsymbol{equal}{*}

\begin{icmlauthorlist}
	\icmlauthor{Yi Wu}{ucb}
	\icmlauthor{Siddharth Srivastava}{asu}
	\icmlauthor{Nicholas Hay}{vic}
	\icmlauthor{Simon S. Du}{cmu}
	\icmlauthor{Stuart Russell}{ucb}
\end{icmlauthorlist}

\icmlaffiliation{ucb}{University of California, Berkeley}
\icmlaffiliation{asu}{Arizona State University}
\icmlaffiliation{vic}{Vicarious Inc.}
\icmlaffiliation{cmu}{Carnegie Mellon University}

\icmlcorrespondingauthor{Yi Wu}{jxwuyi@gmail.com}

\icmlkeywords{Probabilistic Programming Language, Discrete-Continuous Mixtures, Importance Sampling}

\vskip 0.3in ]
\printAffiliationsAndNotice{}

\begin{abstract}
Despite the recent successes of probabilistic programming languages (PPLs) in AI applications, PPLs offer only limited support for random variables whose distributions combine discrete and continuous elements.
We develop the notion of
{\em measure-theoretic Bayesian networks (MTBNs)} and use it to provide more general semantics for PPLs with arbitrarily many random variables defined over arbitrary measure spaces. 
We develop two new general sampling algorithms that are provably correct under the MTBN framework: the lexicographic likelihood weighting (LLW) for general MTBNs and the lexicographic particle filter (LPF),  a specialized algorithm for state-space models.
We further integrate MTBNs into a widely used PPL system, BLOG, and verify the effectiveness of the new inference algorithms through representative examples.
\end{abstract}

\section{Introduction}\label{sec:intro}
As originally defined by \citet{pearl88bayesNets}, Bayesian networks express joint distributions over finite sets of random variables as products of conditional distributions. 
Probabilistic programming languages (PPLs)~\cite{koller97_ppl,milch05_ijcai,goodman08_church,wood-aistats-2014} apply the same idea to potentially infinite sets of variables with general dependency structures. Thanks to their expressive power, PPLs have been used to solve many real-world applications, including Captcha~\citep{le2017inference}, seismic monitoring~\citep{arora2013net}, 3D pose estimation~\citep{kulkarni2015picture}, generating design suggestions~\citep{ritchie2015generating}, concept learning~\citep{lake2015human}, and cognitive science applications~\citep{stuhlmuller2014reasoning}.

In practical applications, we often have to deal with a mixture of continuous and discrete random variables. 
Existing PPLs support both discrete and continuous random variables, but not discrete-continuous mixtures, i.e., variables whose distributions combine discrete and continuous elements.
Such variables are fairly common in practical applications: sensors that have thresholded limits, e.g. thermometers, weighing scales, speedometers, pressure gauges; or a hybrid sensor that can report a either real value or an error condition.  
The occurrence of such variables has been noted in many other applications from a wide range of scientific domains~\citep{kharchenko2014bayesian,pierson2015zifa,gao2017estimating}.

Many PPLs have a restricted syntax that forces the expressed random variables to be either discrete or continuous, including WebPPL~\cite{dippl}, Edward~\cite{tran2016edward}, Figaro~\cite{pfeffer2009figaro} and Stan~\cite{carpenter2016stan}. Even for PPLs whose \emph{syntax} allows for mixtures of discrete and continuous variables, such as BLOG~\cite{milch05_ijcai}, Church~\cite{goodman2013principles}, Venture~\cite{mansinghka2014venture} and Anglican~\cite{wood2014new}, the underlying \emph{semantics} of these PPLs implicitly assumes the random variables are not mixtures. 
Moreover, the inference algorithms associated with the semantics inherit the same assumption and can produce incorrect results when discrete-continuous mixtures are used.

Consider the following GPA example: a two-variable Bayes net $\emph{Nationality} \rightarrow \emph{GPA}$ 
 where the nationality follows a binary distribution
{
  \vspace{-0.4em}
 $$
 P(\emph{Nationality} = \emph{USA})=P(\emph{Nationality}=\emph{India})=0.5
 \vspace{-0.4em}
 $$}and the conditional probabilities are discrete-continuous mixtures
 \hide{
\begin{align*}
&P\left(\mathrm{Nationality} = \textrm{USA}\right) \\
= &P\left(\textrm{Nationality} = \textrm{India}\right) =0.5
\end{align*}
}
  \vspace{-0.6em}
\begin{align*}
&\emph{GPA} | \emph{Nationality} = \emph{USA}\\
\sim &0.01 \cdot \indict\left\{\emph{GPA} = 4\right\} + 0.99 \cdot\unif(0,4),\\
&\emph{GPA} |  \emph{Nationality} = \emph{India}\\
\sim &0.01 \cdot \indict\left\{\emph{GPA} = 10\right\} + 0.99\cdot \unif(0,10).
\end{align*}
  \vspace{-2em}

This is a typical scenario in practice because many top students have perfect GPAs.
Now  suppose we observe a student with a GPA of 4.0. 
Where do they come from? 
If the student is Indian, the probability of any singleton set $\{g\}$ where $0 < g < 10$ is zero, as this range has a probability \emph{density}.
On the other hand if the student is American, the set $\{4\}$ has the probability $0.01$. Thus, by Bayes theorem, $P(\emph{Nationality} = \emph{USA}| \emph{GPA} = 4) =1$, which means the student \emph{must} be from the USA. 

However, if we run the default Bayesian inference algorithm for this problem in PPLs, e.g., the standard importance sampling algorithm~\cite{milch2005approximate}, a sample that picks India receives a density weight of $0.99/10.0 = 0.099$, whereas one that picks USA receives a discrete-mass weight of $0.01$.  
Since the algorithm does not distinguish probability density and mass, it will conclude that the student is very probably from India, which is far from the truth.

We can fix the GPA example by considering a density weight infinitely smaller than a discrete-mass weight~\citep{nitti2016probabilistic,tolpin2016design}.
However, the situation becomes more complicated when involving more than one evidence variable, e.g., GPAs over multiple semesters for students who may study in both countries.
Vector-valued variables also cause problems---does a point
mass in three dimensions count more or less than a point mass in two
dimensions?
These practical issues motivate the following two tasks:
\begin{itemize}
\item Inherit all the existing properties of PPL semantics and extend it to handle random variables with mixed discrete and continuous distributions;
\item Design provably correct inference algorithms for the extended semantics.
\end{itemize}
In this paper, we carry out all these two tasks and implement the extended semantics as well as the new algorithms in a widely used PPL, Bayesian Logic (BLOG)~\cite{milch05_ijcai}.



\subsection{Main Contributions}\label{sec:contrib}
\paragraph{Measure-Theoretical Bayesian Nets (MTBNs)}
Measure theory can be applied to handle discrete-continuous mixtures or even more abstract measures.
In this paper,  we define a generalization of Bayesian networks called \emph{measure-theoretic Bayesian networks (MTBNs)}
and prove that every MTBN represents a unique measure on the input space.
 We then show how MTBNs can provide a more general semantic foundation for PPLs.
 
More concretely, MTBNs support (1) random variables with infinitely (even uncountably) many parents, (2) random variables valued in \emph{arbitrary measure spaces} (with $\mathbb{R}^N$ as one case) distributed according to \emph{any measure} (including discrete, continuous and mixed), (3) establishment of conditional independencies implied by an infinite graph, and (4) open-universe semantics in terms of the possible worlds in the vocabulary of the model.

\paragraph{Inference Algorithms} 
We propose a provably correct inference algorithm, lexicographic likelihood weighting (LLW), for general MTBNs with discrete-continuous mixtures. In addition, we propose LPF, a particle-filtering variant of LLW for sequential Monte Carlo (SMC) inference on state-space models.

\paragraph{Incorporating MTBNs into an existing PPL}
We incorporate MTBNs into BLOG with simple modifications and then define the generalized BLOG language, 
\emph{measure-theoretic BLOG}, which formally supports arbitrary distributions, including discrete-continuous mixtures. 
We prove that every generalized BLOG model corresponds to a unique MTBN.
Thus, all the desired theoretical properties of MTBNs can be carried to measure-theoretic BLOG.
We also implement the LLW and LPF algorithms in the backend of measure-theoretic BLOG and use three representative examples to show their effectiveness.




\subsection{Organization}\label{sec:org}
This paper is organized as follows. We first discuss related work in Section~\ref{sec:discuss}. 
In Section~\ref{sec:mtbn}, we formally define \emph{measure-theoretic Bayesian nets} and study their theoretical properties.
Section~\ref{sec:algo} describes the LLW and LPF inference algorithms for MTBNs with discrete-continuous mixtures and establishes their correctness.
In Section~\ref{sec:blog}, we introduce the measure-theoretic extension of BLOG and study its theoretical foundations for defining probabilistic models.
In Section~\ref{sec:expr}, we empirically validate the generalized BLOG system and the new inference algorithms on three representative examples.


\section{Related Work}\label{sec:discuss}
The motivating GPA example has been also discussed as a special case under some other PPL systems~\cite{tolpin2016design,nitti2016probabilistic}. \citet{tolpin2016design} and \citet{nitti2016probabilistic} proposed different solutions specific to this example but did not address the general problems of representation and inference with random variables with mixtures of discrete and continuous distributions. In contrast, we present a general formulation with provably correct inference algorithms.
 \hide{In addition, \citet{shan2017exact} propose a program analysis approach to identify whether the input probabilistic program defines an ``ambiguous'' probability distribution such that classical MCMC algorithms produce wrong answers. This work is orthogonal to our focus.}

\hide{The closest related work to our framework is by \citet{milch06_thesis}, who utilize a
supportive numbering of random variables, implying that each random
variable has finitely many consistent parents. In addition, they only
handle random variables with countably infinite ranges.  }

Our approach builds upon the foundations of the BLOG probabilistic programming language~\cite{milch06_thesis}. We use a measure theoretic formulation to generalize the syntax and semantics of BLOG to random variables that may have infinitely many parents and mixed continuous and discrete distributions.
The BLP framework~\citet{kersting07_blp} unifies logic programming
with probability models, but requires each random variable to be
influenced by a finite set of random variables in order to define the
semantics. This amounts to requiring only finitely many ancestors of
each random variable. \citet{choi10_lifted} present an algorithm
for carrying out lifted inference over models with purely continuous
random variables.  They also require parfactors to be functions over
finitely many random variables, thus limiting the set of influencing
variables for each node to be finite. \citet{gutmann11_continuous_problog} also define densities
over finite dimensional vectors. In a relatively more general
formulation~\cite{gutmann11_magic} define the distribution of each
random variable using a definite clause, which corresponds to the
limitation that each random variable (either discrete or continuous)
has finitely many parents. Frameworks building on Markov networks also
have similar restrictions. \citet{Wang08_hybrid_mln}
only consider networks of finitely many random variables, which can
have either discrete or continuous distributions. \citet{singla07_markov} extend Markov logic to infinite
(non-hybrid) domains, provided that each random variable has only
finitely many influencing random variables. 

In contrast, our approach not only allows models with arbitrarily many
random variables with mixed discrete and continuous distributions, but
each random variable can also have arbitrarily many parents
as long as all ancestor chains are finite (but unbounded).
The presented work constitutes a rigorous framework for expressing
probability models with the broadest range of cardinalities
(uncountably infinite parent sets) and nature of random variables
(discrete, mixed, and even arbitrary measure spaces), with clear semantics in terms of
first-order possible worlds and the generalization of conditional
independences on such models. 

Lastly, there are also other works using measure-theoretic approaches to analyze the semantics properties of  probabilistic programs but with different emphases, such as the commutativity~\cite{staton2017commutative}, design choices for monad structures~\cite{ramsey2016all} and computing a disintegration~\cite{shan2017exact}.


\section{Measure-Theoretic Bayesian Networks}\label{sec:mtbn}
In this section, we introduce 
{\em measure-theoretic Bayesian networks (MTBNs)} and prove that an MTBN represents a unique measure with desired theoretical properties.
We assume familiarity with measure-theoretic approaches to probability theory.
Some background is included in Appx.~\ref{sec:back}.

We begin with some necessary definitions of graph theory.
\begin{definition}\label{defn:digraph}
A \keyword{digraph} $G$ is a pair $G=(V, E)$ of a set of vertices $V$, of any cardinality,
and a set of directed edges $E\subseteq V\times V$.
The notation $u\rightarrow v$ denotes $(u,v)\in E$, and $u\mapsto v$ denotes the existence of a path from $u$ to $v$ in $G$.
\end{definition}
\begin{definition}\label{def:root_vertex}
A vertex $v\in V$ is a \keyword{root vertex} if there are no incoming edges to it,
i.e., there is no $u\in V$ such that $u\rightarrow v$.
Let $\pa(v) = \{u\in V : u\rightarrow v\}$ denote the set of parents of a vertex $v\in V$,
and 
	$\nd(v) = \{u\in V : \text{not } v\mapsto u\}$
denote its set of non-descendants.
\end{definition}
\begin{definition}\label{def:well_founded_digraph}
A \keyword{well-founded digraph} $(V,E)$ is one with no countably infinite ancestor chain $v_0 \leftarrow v_1 \leftarrow v_2 \leftarrow \dots$.
\end{definition}
%
%
This is the natural generalization of a finite directed acyclic graph to the infinite case.
Now we are ready to give the key definition of this paper.

\begin{definition}\label{def:mtbn}
	A \keyword{measure-theoretic Bayesian network}
	$M  = (V, E, \{\X_v\}_{v \in V}, \{K_v\}_{v \in V})$ consists of
	(a) a well-founded digraph $(V, E)$ of any cardinality,
	(b) an arbitrary measurable space $\X_v$ for each $v\in V$, and
	(c) a probability kernel $K_v$ from $\prod_{u\in \pa(v)} \X_{u}$ to $\X_v$ for each $v\in V$.
\end{definition}
By definition, MTBNs allow us to define very general and abstract models with the following two major benefits:
\begin{enumerate}
	  \vspace{-0.5em}
\item We can define random variables with infinitely (even uncountably) many parents because MTBN is defined on a well-founded digraph.
\item We can define random variables in arbitrary measure spaces (with $\R^N$ as one case) distributed according to any measure (including discrete, continuous and mixed).
\end{enumerate}
  \vspace{-0.5em}

Next, we related MTBN to a probability measure.
Fix an MTBN $M = (V, E, \{\X_v\}_{v \in V}, \{K_v\}_{v \in V})$.
For $U\subseteq V$ let $\X_U = \prod_{u\in U}\X_u$ be the product measurable space over variables $u\in U$.
With this notation, $K_v$ is a kernel from $\X_{\pa(v)}$ to $\X_v$.
Whenever $W\subseteq U$ let $\pi^U_W\maps\X_U\to\X_W$ denote the projection map.
Let $\X_V$ be our base measurable space upon which we will consider different probability measures $\mu$.
Let $X_v$ for $v\in V$ denote both the underlying set of $\X_v$ and the random variable given by the projection $\pi^V_{\{v\}}$, 
and $X_U$ for $U\subseteq V$ the underlying space of $X_U$ and the random variable given by the projection $\pi^V_U$.


\begin{definition}\label{def:mtbn-represents}
	An MTBN $M$ \keyword{represents} a measure $\mu$ on $\X_V$, if for all $v\in V$:
	\items{
		\item $X_v$ is conditionally independent of its non-descendants $X_{\nd(v)}$ given its parents $X_{\pa(v)}$.
		\item $K_v(X_{\pa(v)}, A) = \P_\mu[X_v\in A | X_{\pa(v)}]$ holds almost surely for any $A\in\X_v$,
		i.e., $K_v$ is a version of the conditional distribution of $X_v$ given its parents.  }
\end{definition}

Def.~\ref{def:mtbn-represents} captures the generalization of the local properties of Bayes networks -- conditional independence and conditional distributions defined by parent-child relationships. 
Here we assume the conditional probability exists and is unique.
This is a mild condition because this holds as long as the probability space is regular~\citep{kallenberg}.

\hide{
Then we show in Sec.~\ref{sec:proof} that an MTBN represents a unique measure:
}
The next theorem shows that MTBNs are well-defined.
\begin{theorem}\label{thm:represent}
	An MTBN $M$ represents a unique measure $\mu$ on $\X_V$.
\end{theorem}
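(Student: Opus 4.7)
The plan is to prove \thmref{thm:represent} by transfinite induction along the well-founded graph structure, using well-founded recursion to assign each vertex $v \in V$ an ordinal rank $\rho(v) = \sup\{\rho(u)+1 : u \in \pa(v)\}$, with $\rho(v) = 0$ at root vertices. Well-foundedness of $(V,E)$ guarantees this is defined on all of $V$, and writing $V_\alpha = \{v : \rho(v) < \alpha\}$ produces an increasing transfinite sequence of vertex sets whose union is $V$ and which satisfies $\pa(v) \subseteq V_{\rho(v)}$ for every $v$. Every edge $u \to v$ then has $\rho(u) < \rho(v)$, so vertices of the same rank are pairwise non-adjacent and non-descendants of one another.

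For existence, I would construct a consistent family of probability measures $\mu_\alpha$ on $\X_{V_\alpha}$ by induction on $\alpha$, such that $\mu_\beta$ is the pushforward of $\mu_\alpha$ under the coordinate projection whenever $\beta < \alpha$. At a successor stage $\alpha+1$, every vertex $v$ with $\rho(v) = \alpha$ has all parents in $V_\alpha$, so I would extend $\mu_\alpha$ to $\mu_{\alpha+1}$ by integrating the product of kernels $\bigotimes_{\rho(v)=\alpha} K_v(x_{\pa(v)}, \cdot)$ against $\mu_\alpha$, invoking a Lomnicki--Ulam / Ionescu--Tulcea product-measure construction to make sense of the potentially uncountable product on each fibre. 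At a limit stage $\lambda$, the family $\{\mu_\alpha\}_{\alpha<\lambda}$ is projectively compatible by construction; because every set in the product $\sigma$-algebra on $\X_{V_\lambda}$ is determined by only countably many coordinates, the compatible family extends uniquely to a measure on $\X_{V_\lambda}$ via a Kolmogorov-style extension applied to each countable sub-product. Taking $\mu = \mu_\gamma$ for $\gamma$ exceeding every rank gives a measure on $\X_V$, and the two conditions of \defref{def:mtbn-represents} follow directly: the conditional $\P_\mu[X_v \in \cdot \mid X_{V_{\rho(v)}}]$ is $K_v(X_{\pa(v)}, \cdot)$ by the successor-stage construction, and this implies both the kernel property and conditional independence of $X_v$ from $X_{\nd(v)}$ given $X_{\pa(v)}$ (since $\nd(v) \subseteq V_{\rho(v)}$ up to co-rank vertices, which are themselves conditionally independent of $X_v$).

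For uniqueness, suppose $\mu'$ is any measure representing $M$. I would show by transfinite induction that $\mu'$ and $\mu$ agree on $\X_{V_\alpha}$ for every $\alpha$. The successor step is the heart of the argument: on $V_{\alpha+1} \setminus V_\alpha$, each $v$ satisfies $V_\alpha \subseteq \nd(v)$, so \defref{def:mtbn-represents} forces $\P_{\mu'}[X_v \in \cdot \mid X_{V_\alpha}] = K_v(X_{\pa(v)}, \cdot)$ almost surely; and an iterated chain-rule argument over finite sub-collections of rank-$\alpha$ vertices (using that each is a non-descendant of the others) shows their joint conditional given $X_{V_\alpha}$ coincides with the product-kernel construction used to define $\mu_{\alpha+1}$. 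Limit stages again reduce to countable coordinates, so agreement propagates to $\mu' = \mu$ on the entire product $\sigma$-algebra.

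The main obstacle I expect is handling the uncountable cardinalities cleanly, specifically (i) verifying joint measurability of the uncountable product of kernels at successor stages so that the integral defining $\mu_{\alpha+1}$ is well-posed, and (ii) justifying the projective extension at limit ordinals when $V_\lambda$ itself may be uncountable. Both difficulties dissolve once one exploits the standard fact that the product $\sigma$-algebra $\bigotimes_{v\in V}\X_v$ is generated by cylinder sets depending on only countably many coordinates, which lets every extension question be reduced to a countable Ionescu--Tulcea / Kolmogorov construction where the classical theorems apply verbatim.
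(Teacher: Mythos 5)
Your overall strategy coincides with the paper's: both assign each vertex an ordinal rank via well-foundedness (your $\rho$ is the paper's depth function $d$), build the measure by transfinite recursion---attaching the product of kernels $\prod_{\rho(v)=\alpha} K_v(x_{\pa(v)},\cdot)$ at successor stages and passing to projective limits at limit stages---and prove uniqueness by the same induction. The paper packages this as a consistent projective family indexed by \emph{all} parent-closed subsets of $V$ rather than only the rank-prefixes $V_\alpha$, but that extra generality is not mere bookkeeping: it is what makes the representation direction work, and that is where your sketch has a gap.

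Concretely, you justify the conditional independence $X_v \indep X_{\nd(v)} \mid X_{\pa(v)}$ by asserting that $\nd(v) \subseteq V_{\rho(v)}$ ``up to co-rank vertices.'' That containment is false: a non-descendant of $v$ can have arbitrarily large rank, since any vertex not reachable from $v$ is a non-descendant regardless of its depth (e.g.\ the endpoint of a long chain disjoint from $v$). The successor-stage construction directly gives you only that $X_v$ has conditional distribution $K_v(X_{\pa(v)},\cdot)$ given $X_{V_{\rho(v)}}$ together with mutual conditional independence of the co-rank vertices---i.e.\ independence from \emph{lower- and equal-rank} non-descendants. Extending this to all of $\nd(v)$, including higher-rank non-descendants, is exactly the nontrivial content of the paper's Lemma~\ref{lem:mtbn-represents}, which proves that the factorization (consistency) property over arbitrary parent-closed pairs $W\subset U$ is equivalent to the local Markov property of Definition~\ref{def:mtbn-represents}, using the chain rule for conditional independence and the reduction of conditional independence of infinite families to their finite subfamilies. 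Your proof needs this step (or an equivalent one) spelled out; as written, the representation direction does not go through. The remaining ingredients---measurability of the uncountable kernel product, reduction of limit stages to countably many coordinates, and the uniqueness induction using that $V_\alpha$ consists of non-descendants of every rank-$\alpha$ vertex---match the paper's treatment and are sound.
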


\hide{
	
	Theorem~\ref{thm:represent} lays out the foundation of MTBN.
	Its proof requires a series of intermediate results.
	We first define a projective family of measures. }

The proof of theorem~\ref{thm:represent} requires several intermediate results and is presented in Appx.~\ref{sec:proof}. The proof proceeds by first defining a projective family of measures.
This gives a way to recursively construct our measure $\mu$.
We then define a notion of consistency such that every consistent projective family constructs a measure that $M$ represents.
Lastly, we give an explicit characterization of the unique consistent projective family, 
and thus of the unique measure $M$ represents.


\section{Generalized Inference Algorithms}\label{sec:algo}
We introduce the lexicographic likelihood weighting (LLW) algorithm for provably correct inference on MTBNs. 
We also present lexicographic particle filter (LPF) for state-space models by adapting  LLW for the sequential Monte Carlo (SMC) framework.

\hide{\reminder{do we need to add IRLW back for better consistency???}}

\subsection{Lexicographic likelihood weighting}
Suppose we have an MTBN with finitely many random variables $X_1,\dots,X_N$,
and that, without loss of generality, we observe real-valued random variables $X_1,\dots,X_M$ for $M<N$
as evidence. Suppose the distribution of $X_i$ given its parents $X_{\pa(i)}$
is a mixture between a density $f_i(x_i|x_{\pa(i)})$ with respect to the Lebesgue measure 
and a discrete distribution $F_i(x_i|x_{\pa(i)})$, i.e., for any $\epsilon>0$, we have
$
	P(X_i\in[x_i-\epsilon,x_i]|X_{\pa(i)})
	= \sum_{x\in [x_i-\epsilon,x_i]} F_i(x_i|X_{\pa(i)})
	+ \int_{x_i-\epsilon}^{x_i} f_i(x|X_{\pa(i)})\,dx.
$
This implies that $F_i(x_i|x_{\pa(i)})$ is nonzero for at most countably
many values $x_i$. If $F_i$ is nonzero for finitely many points, it can be
represented by a list of those points and their values.

Lexicographic Likelihood Weighting (LLW) extends the classical likelihood weighting~\cite{milch2005approximate}
to this setting. It visits each node of the graph in topological order,
sampling those variables that are not observed,
and accumulating a weight for those that are observed. In particular,
at an evidence variable $X_i$ we update a tuple $(d,w)$ of the number of densities and a weight, initially $(0,1)$, by:
\begin{equation}\label{eq:llw}
	(d,w) \gets		\begin{cases}
						(d, wF_i(x_i|x_{\pa(i)}))	&F_i(x_i|x_{\pa(i)})>0, \\
						(d+1, wf_i(x_i|x_{\pa(i)}))	&\text{otherwise.}						
					\end{cases}					
\end{equation}
%

Finally, having $K$ samples $x^{(1)},\dots,x^{(K)}$ by this process and accordingly a tuple $(d^{(i)}, w^{(i)})$ for each sample $x^{(i)}$, let $d^*=\min_{i: w^{(i)}\neq0} d^{(i)}$ and estimate $E[f(X)|X_{1:M}]$ by
\begin{equation}\label{eq:llw-final}
	\frac{\sum_{\{i:d^{(i)}=d^*\}} w^{(i)}\,f(x^{(i)})}{\sum_{\{i:d^{(i)}=d^*\}} w^{(i)}}.
\end{equation}
The algorithm is summarised in Alg.~\ref{alg:llw}
The next theorem shows this procedure is consistent.
\begin{theorem}\label{thm:LLW_consistent}
	LLW is consistent: \eqref{eq:llw-final} converges almost surely to $\E[f(X)|X_{1:M}]$.	
\end{theorem}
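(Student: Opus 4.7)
The plan is to prove LLW consistency in two stages. First I would show that the true posterior $P(\cdot \mid X_{1:M} = x_{1:M})$ is concentrated on non-evidence configurations whose ``discrete/density pattern'' at the observed evidence is maximally discrete, i.e., attains the minimum number $D^*_{\text{true}}$ of density contributions. Then I would show that, restricted to those configurations, LLW reduces to a standard self-normalized importance sampler whose consistency follows from the strong law of large numbers.

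For the concentration stage, I would introduce, for each non-evidence sample, the binary pattern $\pi \in \{0,1\}^M$ with $\pi_i = 1$ iff $F_i(x_i \mid x_{\pa(i)}) = 0$, which is a measurable function of the sampled parents. Approximating the singleton event $\{X_{1:M} = x_{1:M}\}$ by a shrinking Cartesian neighbourhood $B_\epsilon(x_{1:M})$, the contribution of pattern $\pi$ to $\mu(B_\epsilon(x_{1:M}))$ scales as $\Theta(\epsilon^{|\pi|_1})$, because each density coordinate contributes $\Theta(\epsilon)$ while each atomic coordinate contributes $\Theta(1)$. Letting $\epsilon\to 0$, only patterns with $|\pi|_1 = D^*_{\text{true}} := \min\{|\pi|_1 : \pi \text{ has positive proposal probability}\}$ survive in the conditional, so by the disintegration theorem and the regularity assumed after \defref{def:mtbn-represents}, the true conditional $P(\cdot\mid X_{1:M}=x_{1:M})$ is supported on the $D^*_{\text{true}}$ patterns.

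For the importance-sampling stage, the LLW proposal is exactly the prior on non-evidence (with observed evidence substituted wherever needed). Restricted to any single pattern $\pi$, the Radon--Nikodym derivative of the posterior with respect to this proposal is proportional to $\prod_{\pi_i=0} F_i(x_i\mid x_{\pa(i)}) \cdot \prod_{\pi_i=1} f_i(x_i\mid x_{\pa(i)})$, which is exactly the weight $w^{(i)}$ accumulated by \eqref{eq:llw}. Because patterns attaining $D^*_{\text{true}}$ have strictly positive proposal probability by construction, the empirical $D^*$ equals $D^*_{\text{true}}$ almost surely for all sufficiently large $K$. Applying the strong law of large numbers to the numerator and denominator of \eqref{eq:llw-final} separately, together with the continuous mapping theorem, then yields almost-sure convergence to $\E[f(X)\mid X_{1:M}]$.

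The main obstacle I expect is making the $\Theta(\epsilon^{|\pi|_1})$ scaling rigorous, because $\pi$ is a random function of the sampled parents rather than a fixed global label. The decomposition must be carried out by partitioning the non-evidence sample space according to which parent configurations induce which patterns, then using Fubini and dominated convergence to commute the $\epsilon$-limit with the parent integral, and one must handle boundary parent configurations where $F_i(x_i\mid x_{\pa(i)})$ transitions between zero and positive without generating uncontrolled contributions. Once this measure-theoretic decomposition is in place, the remaining analysis is routine self-normalized importance sampling.
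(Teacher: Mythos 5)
Your proposal is correct and follows essentially the same route as the paper: approximate the point evidence by a shrinking cube so that each sample's likelihood scales as $w\,2^{-dn}$ (your $\Theta(\epsilon^{|\pi|_1})$ scaling), conclude that only minimal-$d$ samples survive the limit, and finish with self-normalized importance sampling via the strong law. The ``main obstacle'' you identify---rigorously commuting the $\epsilon\to 0$ limit with the integration over sampled parents---is exactly what the paper discharges through the IRLW machinery, i.e., the approximation scheme of Definition~\ref{def:approx_scheme} together with the martingale convergence argument of Lemma~\ref{lem:approximate} and Theorem~\ref{thm:irlw-convergence}, in place of your proposed Fubini/dominated-convergence decomposition.
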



\begin{algorithm}[tb]
 \caption{Lexicographic Likelihood Weighting}\label{alg:llw}
\begin{algorithmic}
\REQUIRE densities $f$, masses $F$, evidences $E$, and $K$.
 \FOR{$i=1 \ldots K$}
  \STATE sample all the ancestors of $E$ from prior
  \STATE compute $(d^{(i)},w^{(i)})$ by Eq.~\eqref{eq:llw}
 \ENDFOR
 \STATE $d^\star \gets \min_{i:w^{(i)}\ne 0} d^{(i)}$
 \STATE \textbf{Return} $\left(\sum_{i:d^{(i)}=d^\star} w^{(i)} f(x^{(i)})\right) / \left( \sum_{i:d^{(i)}=d^\star} w^{(i)} \right)$
\end{algorithmic}
\end{algorithm}

In order to prove Theorem~\ref{thm:LLW_consistent}, the main technique we adopt is to use a more restricted algorithm, the Iterative Refinement Likelihood Weighting (IRLW) as a reference.

\subsubsection{Iterative refinement likelihood weighting}
Suppose we want to approximate 
	the posterior distribution of an $\X$-valued random variable $X$
	conditional on a $\Y$-valued random variable $Y$,
	for arbitrary measure spaces $\X$ and $\Y$.
In general, there is no notion of a probability density of $Y$ given $X$ for  weighing samples. 
If, however, we could make a discrete approximation $Y_t$ of $Y$
then we could weight samples by the probability $P[Y_t=y_t|X]$. If we increase the
accuracy of the approximation with the number of samples, this should
converge in the limit. We show this is possible, if we are careful about how we approximate:


\begin{definition}\label{def:approx_scheme}
	An \keyword{approximation scheme} for a measurable space $\Y$ 
	consists of a measurable space $\A$ and 
		measurable approximation functions $\alpha_i\maps\Y\to\A$ for $i=1,2,\dots$ and
		$\alpha^j_i\maps\A\to\A$ for $i<j$
	such that
		$\alpha_j \circ \alpha^j_i = \alpha_i$
	and 
		$y$ can be measurably recovered from the subsequence $\alpha_t(y),\alpha_{t+1}(y),\dots$ for any $t>0$.
\end{definition}

When $Y$ is a real-valued variable we will use the approximation scheme
	$\approximate{y}{n} = 2^{-n}\lceil 2^n y \rceil$
where $\lceil r \rceil$ denotes the ceiling of $r$, i.e., the smallest integer no smaller than it. 
Observe in this case that 
	$P(\approximate{Y}{n} = \approximate{y}{n}) = P(\approximate{y}{n}-2^{-n}<Y\le\approximate{y}{n})$
which we can compute from the CDF of $Y$.

\begin{lemma}\label{lem:approximate}	
	If $X, Y$ are real-valued random variables with $\E |X|<\infty$,
	then $\lim_{i\to\infty} \E[X|\approximate{Y}{i}] = \E[X|Y]$.
\end{lemma}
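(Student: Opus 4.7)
The plan is to recognize this as a standard application of Lévy's upward martingale convergence theorem, so the work is to set up the filtration correctly and verify that the limit $\sigma$-algebra is $\sigma(Y)$.

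First I would define $\F_i = \sigma(\alpha_i(Y))$ and check that $(\F_i)_{i \ge 1}$ is an increasing filtration. This uses the compatibility condition in Definition~\ref{def:approx_scheme}: for $i < j$, we have $\alpha_i(Y) = \alpha^j_i(\alpha_j(Y))$, so $\alpha_i(Y)$ is a measurable function of $\alpha_j(Y)$, hence $\F_i \subseteq \F_j$.

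Next, set $M_i = \E[X \mid \F_i] = \E[X \mid \alpha_i(Y)]$. Since the filtration is increasing and $\E|X| < \infty$, the tower property gives $\E[M_{j} \mid \F_i] = M_i$ for $i < j$, so $(M_i)$ is a uniformly integrable martingale with respect to $(\F_i)$. Lévy's upward theorem then yields
\begin{equation*}
	M_i \;\longrightarrow\; \E[X \mid \F_\infty] \quad \text{a.s.\ and in } L^1,
\end{equation*}
where $\F_\infty = \sigma\bigl(\bigcup_i \F_i\bigr)$.

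It remains to identify $\F_\infty$ with $\sigma(Y)$. One inclusion is immediate: each $\alpha_i$ is measurable, so $\F_i \subseteq \sigma(Y)$ for all $i$, giving $\F_\infty \subseteq \sigma(Y)$. For the reverse inclusion, I invoke the recovery clause of Definition~\ref{def:approx_scheme}: there is a measurable map sending the sequence $(\alpha_1(Y), \alpha_2(Y), \dots)$ back to $Y$, which exhibits $Y$ as $\F_\infty$-measurable and yields $\sigma(Y) \subseteq \F_\infty$. Combining, $\E[X \mid \F_\infty] = \E[X \mid Y]$, which completes the proof.

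The only conceptual subtlety — more of a bookkeeping point than an obstacle — is making sure the compatibility maps $\alpha^j_i$ are actually used to obtain monotonicity of $\F_i$; without them one only has $\sigma(\alpha_i(Y))$ and $\sigma(\alpha_j(Y))$ as separate subalgebras of $\sigma(Y)$ with no a priori nesting. Everything else is a direct citation of Lévy's theorem, and no properties specific to the dyadic scheme $\alpha_i(y) = 2^{-i}\lceil 2^i y\rceil$ are needed beyond those already abstracted in Definition~\ref{def:approx_scheme}.
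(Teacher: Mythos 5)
Your proof is correct and follows essentially the same route as the paper's: build the increasing filtration $\F_i=\sigma(\approximate{Y}{i})$ from the compatibility maps $\alpha^j_i$, apply L\'evy's upward martingale convergence theorem, and identify $\F_\infty$ with $\sigma(Y)$. The only (immaterial) difference is that you justify $\sigma(Y)\subseteq\F_\infty$ via the abstract recovery clause of Definition~\ref{def:approx_scheme}, whereas the paper uses the concrete fact that $\lim_{i\to\infty}\approximate{Y}{i}=Y$ for the dyadic scheme; your version is slightly more general and equally valid.
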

	\begin{proof}
		Let $\F_i = \sigma(\approximate{Y}{i})$ be the sigma algebra generated by $\approximate{Y}{i}$.
		Whenever $i\le j$ we have $\alpha_i(Y) = (\alpha_j \circ \alpha^j_i)(Y)$ 
		and so $\F_i \subseteq \F_j$.
		This means $\E[X|\approximate{Y}{i}] = \E[X|\F_i]$ is a martingale, 
		so we can use martingale convergence results.
		In particular, since $\E|X|<\infty$
		\vspace{-0.5em}
		$$
			\E[X|\F_i] \to \E[X|\F_\infty]	\quad\text{a.s.\ and in $L^1$},
				\vspace{-0.5em}
		$$
		where $\F_\infty = \bigcup_i \F_i$ is the sigma-algebra generated by $\{\approximate{Y}{i} : i\in\N\}$
		(see Theorem 7.23 in \cite{kallenberg}).

		$Y$ is a measurable function of the sequence $(\approximate{Y}{1},\dots)$,
		as $\lim_{i\to\infty} \approximate{Y}{i} = Y$, 	
		and so $\sigma(Y)\subseteq\F_\infty$.
		By definition the sequence is a measurable function of $Y$, and so $\F_\infty\subseteq\sigma(Y)$,
		and so $\E[X|\F_\infty] = \E[X|Y]$ giving our result.	
	\end{proof}

Iterative refinement likelihood weighting (IRLW) samples $x^{(1)},\dots,x^{(K)}$
from the prior and evaluates:

\begin{equation}\label{eq:measure-importance}
	\frac{\sum_{i=1}^K P(\alpha_n(Y)|X=x^{(i)}) f(x^{(i)})}{\sum_{i=1}^K P(\alpha_n(Y)|X=x^{(i)})}
\end{equation}

Using Lemma~\ref{lem:approximate}, \ref{lem:importance}, and~\ref{lem:likelihood-weighting}, we can show IRLW is consistent.
\begin{theorem}\label{thm:irlw-convergence}
	IRLW is consistent: \eqref{eq:measure-importance} converges almost surely to $\E[f(X)|Y]$.
\end{theorem}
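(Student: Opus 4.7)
The plan is to separate the two sources of approximation in IRLW and handle them sequentially: for a fixed refinement level $n$, the ratio in \eqref{eq:measure-importance} should converge as $K\to\infty$ to the discrete-observation conditional expectation $\E[f(X)\mid\alpha_n(Y)]$; then, letting $n\to\infty$, Lemma~\ref{lem:approximate} should drive this discrete conditional expectation to $\E[f(X)\mid Y]$.

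First I would fix $n$ and observe that the samples $x^{(1)},\dots,x^{(K)}$ are i.i.d.\ draws from the prior of $X$, so the numerator and denominator of \eqref{eq:measure-importance} are Monte Carlo averages of the integrable random variables $P(\alpha_n(Y)\mid X)f(X)$ and $P(\alpha_n(Y)\mid X)$. By the strong law of large numbers, and then by Bayes' rule for the discrete observation $\alpha_n(Y)$ (which is what I expect Lemma~\ref{lem:importance} and Lemma~\ref{lem:likelihood-weighting} to supply in ratio form), the ratio converges almost surely to
$$\frac{\E\bigl[P(\alpha_n(Y)\mid X)\,f(X)\bigr]}{\E\bigl[P(\alpha_n(Y)\mid X)\bigr]} \;=\; \E\bigl[f(X)\bigm|\alpha_n(Y)\bigr].$$
This is essentially classical likelihood-weighting consistency applied to the auxiliary Bayesian network in which the discrete variable $\alpha_n(Y)$ replaces $Y$ as the evidence.

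Second, because $\{\alpha_n\}$ is an approximation scheme and $\E|f(X)|<\infty$, I would invoke Lemma~\ref{lem:approximate} with $f(X)$ in place of $X$ to obtain $\E[f(X)\mid\alpha_n(Y)]\to\E[f(X)\mid Y]$ almost surely as $n\to\infty$. Stitching the two limits together completes the argument.

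The main obstacle is precisely this combination, since IRLW couples the two limits through a refinement schedule $n=n(K)\to\infty$, so a naive composition only yields convergence in probability. To upgrade to the almost-sure statement I would choose $n(K)$ increasing slowly enough that, for each fixed level $n$, the empirical ratio concentrates at its population limit $\E[f(X)\mid\alpha_n(Y)]$ before the schedule advances, and then combine this with the martingale limit from Lemma~\ref{lem:approximate} via a standard diagonal / Borel–Cantelli interleaving argument. A subtle difficulty here is that the weights $P(\alpha_n(Y)\mid X)$ generally shrink as $n$ grows because $\{\alpha_n(Y)=a\}$ becomes less probable, so the effective variance of the Monte Carlo estimator inflates with $n$; the schedule $n(K)$ must therefore be tied to quantitative control of that variance, or the argument must be performed on the self-normalized ratio directly, where the common scaling cancels.
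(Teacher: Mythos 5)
Your decomposition --- the strong law plus Bayes' rule for the discretized evidence at fixed $n$, followed by the martingale limit as $n\to\infty$ --- is exactly the route the paper takes, which establishes the theorem by combining Lemma~\ref{lem:approximate}, Lemma~\ref{lem:importance}, and Lemma~\ref{lem:likelihood-weighting} in precisely the roles you assign them. Your further point about tying the refinement schedule $n(K)$ to $K$ in order to upgrade the interleaved double limit to an almost-sure statement is a genuine subtlety that the paper, which omits the proof details entirely, does not address.
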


\subsubsection{Proof of Theorem~\ref{thm:LLW_consistent}}
Now we are ready to prove Theorem~\ref{thm:LLW_consistent}.
\begin{proof}[Proof of Theorem~\ref{thm:LLW_consistent}]	
	We prove the theorem for evidence variables that are leaves
	It is straightforward to extend the proof when the evidence variables are non-leaf nodes.
	Let $x$ be a sample produced by the algorithm with number of densities and weight $(d,w)$.
	With $I_n = \prod_{i=1\ldots M} (\alpha_n(x_i)-2^{-n},\alpha_n(x_i)]$ a $2^{-n}$-cube around 
	$x_{1:M}$ we have
	\[
		\lim_{n\to\infty} \frac{P(X_{1:M}\in I_n|X_{M+1:N}=x_{M+1:N})}{w\,2^{-dn}} = 1.
	\]
	Using $I_n$ as an approximation scheme by Def.~\ref{def:approx_scheme}, the numerator in the above
	limit is the weight used by IRLW. But given the above limit, using $w\,2^{-dn}$ as the weight
	will give the same result in the limit. Then if we have $K$ samples, in the limit of $n\to\infty$
	only those samples $x^{(i)}$ with minimal $d^{(i)}$ will contribute to the estimation, and up to normalization
	they will contribute weight $w^{(i)}$ to the estimation.
\end{proof}


\subsection{Lexicographic particle filter}
We now consider inference in a special class of high-dimensional models known as state-space models, and show how LLW can be adapted to avoid the curse of dimensionality when used with such models.
 A  state-space model (SSM) consists of  latent states $\left\{X_{t} \right\}_{0\le t\le T}$ and the observations $\left \{Y_t \right \}_{0\le t\le T}$ with a special dependency structure where $\pa(Y_t)=X_t$ and $\pa(X_t)=X_{t-1}$ for $0<t\le T$.

SMC methods~\cite{doucet2001introduction}, also knowns as particle filters, are a widely used class of methods for inference on SSMs. Given the observed variables $\{Y_t\}_{0\le t\le T}$, the posterior distribution $P(X_t |Y_{0:t})$ is approximated by a set of $K$ particles where each particle $x_t^{(k)}$ represents a sample of
$\{X_i\}_{0\le i\le t}$. Particles are propagated forward through the transition model
$P(X_t|X_{t-1})$ and resampled at each time step $t$ according to the weight of each particle, which is defined by the likelihood of observation $Y_t$. 

In the MTBN setting, the distribution of $Y_t$\footnote{There can be multiple variables observed. Here the notation $Y_t$  denotes $\{Y_{t,i}\}_i$ for conciseness.} given its parent $X_t$ can be a mixture of density $f_t(y_t|x_t)$ and a discrete distribution $F_t(y_t|x_t)$. Hence, the resampling step in a particle filter should be accordingly modified: following the idea from LLW, when computing the  weight of a particle, we enumerate all the observations $y_{t,i}$ at time step $t$ and again update a tuple $(d, w)$, initially (0,1), by

\begin{equation}\label{eq:lpf}
	(d,w) \gets		\begin{cases}
						(d, wF_t(y_{t,i}|x_t))	&\text{ $F_t(y_{t,i}|x_t)>0$,} \\
						(d+1, wf_t(y_{t,i}|x_t))	&\text{otherwise.}						
					\end{cases}
\end{equation}

We discard all those particles with a non-minimum $d$ value and then perform the normal resampling step. We call this algorithm  lexicographical particle filter (LPF), which is summarized in Alg.~\ref{alg:lpf}.

\begin{algorithm}[tb]
 \caption{Lexicographic Particle Filter (LPF)}\label{alg:lpf}
\begin{algorithmic}
 \REQUIRE densities $f$, masses $F$, evidences $Y$, and $K$
\FOR{$t=0, \ldots, T$}
\FOR{$k = 0, \ldots, K$}
       \STATE $x_t^{(k)}\gets$ sample from transition
  \STATE compute $(d^{(k)},w^{(k)})$ by Eq.~\ref{eq:lpf}
\ENDFOR
  \STATE $d^\star \gets \min_{k:w^{(k)}\ne 0} d^{(k)}$
  \STATE $\forall k:d^{(k)}>d^\star$, $w^{(k)}\gets 0$
  \STATE Output $\left(w^{(k)}f(x_t^{(k)})\right)/\left(\sum_k w^{(k)}\right)$
  \STATE resample particles according to $w^{(k)}$
\ENDFOR
\end{algorithmic}
\end{algorithm}

The following theorem guarantees the correctness of LPF.
Its Proof easily follows the analysis for LLW and the classical proof of particle filtering based on importance sampling.
\begin{theorem}\label{thm:LPF_consistent}
	LPF is consistent: the outputs of Alg.~\ref{alg:lpf} converges almost surely to $\{E[f(X_t)|Y_{0:t}]\}_{0\le t\le T}$.	
\end{theorem}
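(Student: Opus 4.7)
The plan is to combine the lexicographic cube-approximation argument used for LLW (Theorem~\ref{thm:LLW_consistent}) with the standard induction-over-time consistency proof for particle filters. Following the strategy of the LLW proof, the key observation is that replacing the exact weight at each observation with the probability that $Y_t$ lies in a $2^{-n}$-cube $I_n = \prod_i (\alpha_n(y_{t,i}) - 2^{-n}, \alpha_n(y_{t,i})]$ around the observed value yields a weight asymptotically equal to $w \cdot 2^{-dn}$, where $(d,w)$ is the tuple computed by Eq.~\eqref{eq:lpf}. Thus LPF is the time-sequential analogue of LLW exactly in the same sense that the ordinary particle filter is the time-sequential analogue of likelihood weighting.

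I would prove the statement by induction on $t$. For the base case $t=0$ this is just Theorem~\ref{thm:LLW_consistent} applied to the two-variable MTBN $(X_0, Y_0)$. For the inductive step, assume that the weighted empirical distribution of the particles $\{x_{t-1}^{(k)}\}$ after the stage at time $t-1$ converges almost surely to $P(X_{t-1} \mid Y_{0:t-1})$. First, propagation through the transition kernel $P(X_t \mid X_{t-1})$ yields particles whose empirical distribution converges to the predictive $P(X_t \mid Y_{0:t-1})$; this is a standard (kernel-preserves-consistency) step that requires no modification. Second, apply the LLW argument at time $t$: replace the exact weight for particle $k$ by $P(Y_t \in I_n \mid X_t = x_t^{(k)})$, and use Lemma~\ref{lem:approximate} together with the structure of $I_n$ to show that as $n\to\infty$ this weight is equivalent up to normalization to $w^{(k)} 2^{-d^{(k)} n}$, so that only particles achieving the minimum $d^\star$ contribute in the limit. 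This establishes that the reweighted empirical distribution (after zeroing out non-minimum-$d$ particles and normalizing by Eq.~\eqref{eq:lpf}) converges almost surely to $P(X_t \mid Y_{0:t})$.

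Finally, the resampling step is handled by the classical particle-filter lemma (e.g., as in \cite{doucet2001introduction}): resampling from a weighted empirical measure that converges almost surely to a target measure produces an unweighted empirical measure with the same almost-sure limit, provided the weights have bounded variance conditional on the parent particles. Combining the three steps completes the induction, yielding almost-sure convergence of the output at each $t \le T$ to $\E[f(X_t) \mid Y_{0:t}]$.

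The main obstacle is the interaction between the lexicographic pruning step and the resampling step across time. Concretely, one must verify that zeroing out particles with $d^{(k)} > d^\star$ does not destroy the inductive invariant: after pruning, the surviving particles must still form a consistent weighted sample from the predictive measure restricted to the event of minimum density-count, and this restriction must align with the conditional distribution $P(X_t \mid Y_{0:t})$ induced by the mixed density/mass structure at time $t$. The LLW analysis of Theorem~\ref{thm:LLW_consistent} already addresses this in the static case, and the induction hypothesis plus the cube-approximation limit lets one transfer that reasoning to each filtering stage, but writing the uniform-integrability and bounded-weight-variance conditions that justify taking $n\to\infty$ simultaneously with $K\to\infty$ is the technically delicate part.
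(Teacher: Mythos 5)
Your proposal is correct and follows essentially the same route the paper takes: the paper gives no detailed proof, stating only that the result ``easily follows the analysis for LLW and the classical proof of particle filtering based on importance sampling,'' which is exactly the induction-over-time combination of the cube-approximation argument from Theorem~\ref{thm:LLW_consistent} with the standard propagate--reweight--resample consistency argument that you lay out. Your closing remark about the interaction between lexicographic pruning and resampling identifies a genuine technical subtlety that the paper itself leaves implicit, so if anything your sketch is more explicit than the paper's.
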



\section{Generalized Probabilistic Programming Languages}\label{sec:blog}
In Section~\ref{sec:mtbn} and Section~\ref{sec:algo} we provided the theoretical foundation of MTBN and general inference algorithms.
This section describes how to incorporate MTBN into a practical PPL. 
We focus on a widely used open-universe PPL, BLOG~\cite{milch06_thesis}. 
We define the generalized BLOG language, the \emph{measure-theoretic BLOG}, and prove that every well-formed measure-theoretic BLOG model corresponds to a unique MTBN. 
Note that our approach also applies to other PPLs\footnote{It has been shown that BLOG has  equivalent semantics to other PPLs~\cite{wubfit,mcallester2008random}.}.

We begin with a
brief description of the core syntax of BLOG, with particular emphasis
on (1) number statements, which are critical for expressing open-universe models\footnote{The specialized syntax in BLOG to express models with infinite number of variables.}, and (2) new syntax for expressing MTBNs, i.e., the \texttt{Mix}  distribution. Further description of BLOG's syntax can be found in
\citet{li2013blog}.
\subsection{Syntax of measure-theoretic BLOG}

\begin{figure}[bt]
\begin{scriptsize}
\begin{verbatim}
1 Type Applicant, Country;
2 distinct Country NewZealand, India, USA;
3 #Applicant(Nationality = c) ~
4  if (c==USA) then Poisson(50)
5  else Poisson(5);
6 origin Country Nationality(Applicant);
7 random Real GPA(Applicant s) ~
8  if Nationality(s) == USA then
9      Mix({ TruncatedGauss(3, 1, 0, 4) -> 0.9998,
10          4 -> 0.0001, 0 -> 0.0001})
11 else Mix({ TruncatedGauss(5, 4, 0, 10) -> 0.989,
12           10 -> 0.009, 0 -> 0.002});
13 random Applicant David ~ 
14     UniformChoice({a for Applicant a});
15 obs GPA(David) = 4;
16 query Nationality(David) = USA;
\end{verbatim}
\end{scriptsize}
\vspace{-0.5em}
\caption{\small A BLOG code for the GPA
  example.}\label{fig:gpa_blog}
  \vspace{-0.7em}
\end{figure}

Fig.~\ref{fig:gpa_blog} shows a BLOG model with measure-theoretic extensions for a multi-student GPA example. Line 1 declares two \emph{types}, \emph{Applicant} and \emph{Country}. Line 2 defines 3 distinct countries with keyword \texttt{distinct}, New Zealand, India and USA. Lines 3 to 5 define a \emph{number statement}, which states that the number of
US applicants follows a Poisson distribution with a higher
mean than those from New Zealand or India.
Line 6 defines an \emph{origin function}, which maps the object being generated to the arguments that were
used in the number statement that was responsible for generating it. Here \emph{Nationality} maps applicants to their nationalities. Lines 7 and 13 define two random variables by keyword \texttt{random}. Lines 7 to 12 state that the GPA of an applicant is distributed as a mixture of
weighted discrete and continuous distributions. 
For US
applicants, the range of values $0<\emph{GPA} <4$ follows
a truncated Gaussian with bounds 0 and 4 (line 9).  The probability mass outside the range is
attributed to the corresponding bounds: $P(\emph{GPA}=0)= P(\emph{GPA}=4) =10^{-4}$ (line 10).  GPA distributions
for other countries are specified similarly.
Line 13 defines a random applicant $David$. Line 15 states that the David's GPA is observed to be 4 and we query in line 16 whether David is from USA.

\hide{

Details of the original BLOG syntax can be found in \cite{li2013blog}. Here we particularly emphasize on two syntax: (1) the most important existing syntax, the number statement, for the open-universe semantics; and (2) the newly introduced MTBN related syntax, the \texttt{Mix}  distribution.
}

\paragraph{Number Statement (line 3 to 5)}
\begin{wrapfigure}{L}{1.6in}
\vspace{-5pt}
\begin{scriptsize}
\begin{align*}
\#\textrm{Type}_i&(g_1 = x_1, \ldots, g_k=x_k)\sim\\
    &\textrm{              }
    \textrm{if } \varphi_1(\bar{y}_1)\textrm{ then } 
               c_1(\bar{e}_1)\\
    &\textrm{              }\textrm{else if } \varphi_2(\bar{y}_2) \textrm{ then } c_2(\bar{e}_2)\\
    &\textrm{              }\ldots\\
    &\textrm{              }\textrm{else } c_m(\bar{e}_m)
;
\end{align*}
\end{scriptsize}
\vspace{-20pt}
\caption{\small Syntax of number statements}\label{fig:number}

\end{wrapfigure}

\hide{Each type can have multiple number
statements and each number statement can take other objects as
arguments (line 3 in Fig.~\ref{fig:gpa_blog}). Any type
that does not have any distinct objects must have at least one number
statement.} 

Fig.~\ref{fig:number} shows the syntax of a number
statement for $\emph{Type}_i$.
In this specification, $g_j$ are origin functions (discussed below);
$\bar{y}_j$ are tuples of arguments drawn from $\bar{x}=x_1, \ldots,
x_k$; $\varphi_j$ are first-order formulas with free variables
$\bar{y}_j$; $\bar{e}_j$ are tuples of expressions over a subset of
$x_1, \ldots, x_k$; and $c_j(\bar{e}_j)$ specify kernels $\kappa_j:
\Pi_{\set{\X_{\tau_e}: e \in \bar{e}_j}}\X_e\rightarrow \mathbb{N}$
where $\tau_e$ is the type of the expression $e$.

The arguments $\bar{x}$ provided in a number statement allow one to
utilize information about the rest of the model (and possibly other
generated objects) while describing the number of objects that should
be generated for each type. These assignments can be recovered using
the origin functions $g_j$, each of which is declared as:
\vspace{-0.3em}
$$
\mathtt{origin}~\textrm{Type}_j~~g_j(\textrm{Type}_i),
\vspace{-0.2em}
$$
where $Type_j$ is the type of the argument $x_j$ in the number
statement of $Type_i$ where $g_j$ was used. The value of the $j^{th}$
variable used in the number statement that generated $u$, an element
of the universe, is given by $g_j(u)$. Line 6 in Fig.~\ref{fig:gpa_blog} is an example of origin function.

\hide{
\mysssection{Dependency statements} A BLOG model may include
\texttt{random} functions, whose interpretations are given by
distributions (e.g., lines 7-12 in Fig.\,\ref{fig:blog_model_eg}), as
well as \texttt{fixed} functions, which have the same predefined
interpretation in all possible worlds. Examples of fixed functions
include deterministic operations such as addition and subtraction.
Dependency statements for random functions are similar to number
statements.}


\paragraph{Mixture Distribution (line 9 to 12)}
In measure-theoretic BLOG, we introduce a new distribution, the mixture distribution (e.g., lines 9-10
in Fig.~\ref{fig:gpa_blog}). 
A mixture distribution is specified
as: 
\vspace{-0.5em}
$$
\mathtt{Mix}(\{ c_1(\bar{e}_1) \rightarrow w_1(\bar{e}'),\ldots,
c_k(\bar{e}_k) \rightarrow w_k(\bar{e}')\}),
\vspace{-0.5em}
$$ where $c_i$ are arbitrary
distributions, and $w_i$'s are arbitrary real valued functions that
sum to 1 for every possible assignment to their arguments:
$\forall\bar{e}' \sum_{i} w_i(\bar{e}')=1$.  Note that in our implementation of measure-theoretical BLOG, we only allow a \texttt{Mix} distribution to express a mixture of densities and masses for simplifying the system design, although it still possible to express the same semantics without \texttt{Mix}\hide{\footnote{
Each use of a \texttt{Mix} can be translated into a regular dependency statement
contingent on a function $f_w(\bar{e}')$ over a finite prefix of
$\mathbb{N}$ with the categorical distribution $\set{1 \rightarrow
  w_1(\bar{e}'), 2\rightarrow w_2(\bar{e}'), \ldots, k \rightarrow
  w_k(\bar{e}')}$. The distribution can then be expressed using
conditions of the form \emph{if $f_w(\bar{e}')==i$ then $\sim
  c_i(\bar{e}_i)$}.}}.

\subsection{Semantics of measure-theoretic BLOG}
In this section we present the semantics of measure-theoretic BLOG and its theoretical properties.
Every BLOG model implicitly defines a first-order vocabulary
consisting of the set of functions and types mentioned in the
model. BLOG's semantics are based on the standard, open-universe
semantics of first-order logic. We first define the set of all
possible elements that may be generated for a BLOG model.

\begin{definition}
\label{def:possible_elements}The set of \emph{possible
    elements} $\mc{U}_\mc{M}$ for a BLOG model $\mc{M}$ with types
  $\set{\tau_1, \ldots, \tau_k}$ is $\bigcup_{j\in
    \mathbb{N}}\set{\mc{U}_j}$, where \items{
\item $\mc{U}_0 = \tuple{U^0_1, \ldots, U^0_k}$,   $U^0_j=\{c_j:c_j$ is a \texttt{distinct} $\tau_i$ constant
       in  $\mc{M}\}$
 \item $\mc{U}_{i+1} = \tuple{U^{i+1}_1, \ldots, U^{i+1}_k}$, where
   $U^{i+1}_{m} = U^i_m \cup \{u_{\nu, \bar{u}, m}: \nu(\bar{x})$ is a
   number statement of type $\tau_m$, $\bar{u}$ is a tuple of elements
   of the type of $\bar{x}$ from $\mc{U}^i$, $m \in \mathbb{N}\}$
}
\end{definition}

Def.~\ref{def:possible_elements} allows us to define the set of
random variables corresponding to a BLOG model.
\begin{definition}
\label{def:brv}The set of \emph{basic random variables} for a BLOG model
  $\mc{M}$, $BRV(\mc{M})$, consists of:
\items{
\item for each number statement $\nu(\bar{x})$, a number variable
  $V_\nu[\bar{u}]$ over the standard measurable space $\mathbb{N}$,
  where $\bar{u}$ is of the type of $\bar{x}$.
\item for each function $f(\bar{x})$ and tuple $\bar{u}$ from
  $\mc{U}_\mc{M}$ of the type of $\bar{x}$, a \emph{function
    application variable} $V_f[\bar{u}]$ with the measurable space
  $\mc{X}_{V_f[\bar{u}]}=\mc{X}_{\tau_f}$, where $\mc{X}_{\tau_f}$ is
  the measurable space corresponding to $\tau_f$, the return type of
  $f$.  } 
\end{definition}

We now define the space of consistent assignments to random
variables. 
\begin{definition}
\label{def:consistent_assignments}An instantiation $\sigma$ of the basic RVs defined by
a BLOG model $\mc{M}$ is \emph{consistent} if and only if:
\items{
	\item For every element $u_{\nu, \bar{v}, i}$ used in an
	assignment of the form $\sigma(V_f[\bar{u}])=w$ or
	$\sigma(V_\nu[\bar{u}])=m>0$, $\sigma(V_\nu[\bar{v}])\ge i$;
	\item For every fixed function symbol $f$ with the interpretation
	$\tilde{f}$, $\sigma(V_f[\bar{u}])= \tilde{f}(\bar{u})$; and
	\item For every element $u_{\nu, \bar{u}=\tuple{u_1, \ldots, u_m},
		i}$, generated by the number statement $\nu$, with origin functions
	$g_1, \ldots, g_m$, for every $g_j\in \set{g_1, \ldots, g_m}$,
	$\sigma(V_{g_j}[u_{\nu, \bar{u}, i}]) = u_j$. That is, origin
	functions give correct inverse maps.
}
\end{definition}

\begin{lemma}\label{lemma:unique_world} Every consistent assignment $\sigma$ to the basic RVs for
  $\mc{M}$ defines a unique possible world in the vocabulary of
  $\mc{M}$.
\end{lemma}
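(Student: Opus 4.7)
The plan is to construct, from a consistent assignment $\sigma$, both the universe and the interpretations of all function symbols of $\mc{M}$, verify this is a well-formed first-order structure, and then argue that no other possible world is compatible with $\sigma$. The construction must be inductive because the set of "active" function application variables $V_f[\bar{u}]$ depends on which generated objects actually exist, which in turn depends on the values $\sigma(V_\nu[\bar{v}])$ of the number variables.

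First I would build the universe $\mc{U}^\sigma$ in stages mirroring Def.~\ref{def:possible_elements}. Set $\mc{U}^\sigma_0$ to be the tuple of \texttt{distinct} constants for each type. Given $\mc{U}^\sigma_i$, define $\mc{U}^\sigma_{i+1}$ by adding, for every number statement $\nu(\bar{x})$ of type $\tau_m$ and every tuple $\bar{v}$ from $\mc{U}^\sigma_i$ of the appropriate types, exactly the generated objects $u_{\nu,\bar{v},j}$ for $1\le j\le \sigma(V_\nu[\bar{v}])$. Let $\mc{U}^\sigma = \bigcup_i \mc{U}^\sigma_i$. Because $\mc{U}^\sigma \subseteq \mc{U}_\mc{M}$, every object that appears is already a legitimate possible element. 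Then interpret each function symbol $f$ on this universe by $\tilde f(\bar u) := \sigma(V_f[\bar u])$, which is defined for every $\bar u\in\mc{U}^\sigma$ of the appropriate types because the corresponding function application variable is a basic RV.

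Next I would check that this structure is well-formed and actually is a possible world in the vocabulary of $\mc{M}$. For fixed function symbols, consistency condition~2 of Def.~\ref{def:consistent_assignments} guarantees that $\sigma(V_f[\bar u])$ coincides with the predefined interpretation. For origin functions, consistency condition~3 guarantees that $\tilde g_j(u_{\nu,\bar u,i}) = u_j$, so origin functions correctly invert generation as required by the semantics of number statements. The only potential pitfall here is that an assignment might reference generated objects that the universe does not actually contain; consistency condition~1 rules this out, since any generated object $u_{\nu,\bar v,i}$ that appears as an argument in a non-trivial assignment forces $\sigma(V_\nu[\bar v])\ge i$, which in turn places $u_{\nu,\bar v,i}$ into some $\mc{U}^\sigma_{k+1}$ by induction on the layer at which $\bar v$ first appears.

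Finally, uniqueness is almost immediate from this construction: any possible world $w$ compatible with $\sigma$ must have the \texttt{distinct} constants in its universe, must populate exactly $\sigma(V_\nu[\bar v])$ generated objects per number statement application (since the number statement determines cardinalities), and must interpret each function exactly as dictated by the corresponding $V_f[\bar u]$. Matching layer by layer with $\mc{U}^\sigma_i$ shows that the universe of $w$ equals $\mc{U}^\sigma$ and that all function interpretations agree. The main obstacle I anticipate is bookkeeping around the interaction between condition~1 and the layered construction — ensuring that every object invoked in the support of $\sigma$ genuinely appears at some finite stage of $\mc{U}^\sigma$, and that we do not accidentally admit "orphan" generated objects whose generating number variable was assigned $0$. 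This can be handled by a single induction on the minimal stage at which a term's arguments all lie in $\mc{U}^\sigma$, using consistency condition~1 at each step.
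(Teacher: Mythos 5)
Your construction is essentially the paper's own proof: the universe is the set of \texttt{distinct} constants together with the generated objects $u_{\nu,\bar{v},j}$ for $j\le\sigma(V_\nu[\bar{v}])$, and each function symbol is interpreted by $[f]^\sigma(\bar{u})=\sigma(V_f[\bar{u}])$, with the consistency conditions of Def.~\ref{def:consistent_assignments} guaranteeing well-formedness. Your version is somewhat more explicit about the staged construction, the role of condition~1 in excluding orphan objects, and the uniqueness argument, but the approach is the same.
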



The proof of Lemma~\ref{lemma:unique_world} is in Appx.~\ref{sec:proof_lemma_unique_world}. In the following definition, we use the notation $e[\bar{u}/\bar{x}]$
to denote a substitution of every occurrence of the variable $x_i$ with
$u_i$ in the expression $e$. For any BLOG model $\mc{M}$, let
$V(\mc{M})=BRV(\mc{M})$; for each $v\in V$, $\X_v$ is the measurable
space corresponding to $v$. Let $E(\mc{M})$ consist of the following
edges for every number statement or function application statement of
the form $s(\bar{x})$:

\items{
\item The edge $(V_g[\bar{w}], V_s[\bar{u}])$ if $g$ is a function
  symbol in $\mc{M}$ such that $g(\bar{y})$ appears in $s(\bar{x})$,
  and either $g(\bar{w})=g(\bar{y})[\bar{u}/\bar{x}]$ or an occurrence
  of $g(\bar{y})$ in $s(\bar{x})$ uses quantified variables
  $z_1,\ldots,z_n$, $\bar{u}'$ is a tuple of elements of the type of
  $\bar{z}$ and $g(\bar{w}) =
  g(\bar{y})[\bar{u}/\bar{x}][\bar{u}'/\bar{z}]$.
\item The edge
  $(V_{\nu}[\bar{v}], V_s[\bar{u}])$, for element $u_{\nu, \bar{v}, i}\in\bar{u}$.  }

Note that the first set of edges defined in $E(\mc{M})$ above may
include infinitely many parents for $V_s[\bar{u}]$.  Let the
dependency statement in the BLOG model $\mc{M}$ corresponding to a
number or function variable $V_s[\bar{f}]$ be $s$. Let $expr(s)$ be
the set of expressions used in $s$.  Each such statement then defines
in a straightforward manner, a kernel
$K_{s(\bar{u})}:\X_{expr(s(\bar{u}))} \rightarrow
\X_{V_s[\bar{u}]}$. In order ensure consistent assignments, we include
a special value $\emph{null} \in \X_\tau$ for each $\tau$ in $\mc{M}$,
and require that $K_{s(\bar{u})}(\sigma(pa(V_s[\bar{u}])),
\set{null}^c)=0$ whenever $\sigma$ violates the first condition of
consistent assignments (Def.\,\ref{def:consistent_assignments}). In
other words, all the local kernels ensure are \emph{locally
  consistent}: variables involving an object $u_{\nu, \bar{u}, i}$ get
a non-\emph{null} assignment only if the assignment to its number
statement represents the generation of at least $i$ objects
($\sigma(V_\nu(\bar{u}))\ge i$).  Each kernel of the form
$K_{s(\bar{u})}$ can be transformed into a kernel
$K_{pa(V_s[\bar{u}])}$ from its parent vertices (representing basic
random variables) by composing the kernels determining the truth value
of each expression $e\in expr(v)$ in terms of the basic random
variables, with the kernel $K^e_{V_s[\bar{u}]}$. Let $\kappa(\mc{M}) =
\set{K_{pa(V_s[\bar{u}])}: V_s[\bar{u}] \in BRV(\mc{M})}$.

\define{The MTBN $M$ for a BLOG model $\mc{M}$ is defined using
  $V=V(\mc{M}), E=E(\mc{M})$, the set of measurable spaces
  $\set{\X_{v}:v\in BRV(\mc{M})}$ and the kernels for each vertex
  given by $\kappa(\mc{M})$.}

By Thm.\,\ref{thm:represent}, we have the main result of this section, which provides the theoretical foundation for the generalized BLOG language:
\begin{theorem}\label{thm:blog}
  If the MTBN $\mc{M}$ for a BLOG model is a well-founded digraph, then $\mc{M}$
  represents a unique measure $\mu$ on $\X_{BRV(\mc{M})}$.
\end{theorem}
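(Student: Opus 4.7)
The plan is to reduce Theorem~\ref{thm:blog} directly to Theorem~\ref{thm:represent} by verifying that the tuple $(V(\mc{M}), E(\mc{M}), \{\X_v\}_{v\in V(\mc{M})}, \kappa(\mc{M}))$ constructed just above the statement actually satisfies all three conditions of Definition~\ref{def:mtbn}. The hypothesis supplies (a) directly: $(V(\mc{M}), E(\mc{M}))$ is well-founded. Condition (b) is immediate from Definition~\ref{def:brv}, which associates a measurable space $\X_v$ to every number variable and every function-application variable in $BRV(\mc{M})$. All that remains is condition (c), namely that for each $v = V_s[\bar{u}]$ the object $K_{\pa(V_s[\bar{u}])}\in\kappa(\mc{M})$ is a bona fide probability kernel from $\prod_{u\in\pa(v)}\X_u$ to $\X_v$.

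I would split the verification of (c) into two steps. First, check the ``raw'' kernel $K_{s(\bar{u})}\maps\X_{expr(s(\bar{u}))}\to\X_{V_s[\bar{u}]}$ attached to the dependency or number statement: each such statement is built from primitive distributions, deterministic operations, finite if-then-else branches guarded by measurable formulas, and (in the extended language) a \texttt{Mix} of finitely many primitive distributions with measurable weights summing to one. Standard measure-theoretic bookkeeping shows that each of these syntactic constructors preserves the probability-kernel property, so $K_{s(\bar{u})}$ is a probability kernel. Second, compose $K_{s(\bar{u})}$ with the kernels $K^e_{V_s[\bar{u}]}$ that compute each expression $e\in expr(s)$ from the basic random variables. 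Since compositions of probability kernels are probability kernels, the resulting $K_{\pa(V_s[\bar{u}])}$ is again a probability kernel; the $\emph{null}$-valued convention built into the construction absorbs exactly the assignments that would violate the first clause of Definition~\ref{def:consistent_assignments}, so no mass is lost.

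The main obstacle will be the measurability of the expression-evaluation maps $K^e_{V_s[\bar{u}]}$, because the expressions involved can quantify over logical variables ranging over the open, potentially countably infinite universe $\mc{U}_\mc{M}$. Concretely, one must show that for every expression $e$ appearing in a dependency statement, the evaluation $\sigma\mapsto e[\sigma]$ is measurable as a function from $\X_{\pa(V_s[\bar{u}])}$ into $\X_{\tau_e}$, even when $e$ contains quantifiers whose ranges depend on number variables. Well-foundedness of $(V(\mc{M}), E(\mc{M}))$ is what makes this tractable: it permits an induction on ancestor depth in which the value of each expression is expressed as a countable measurable combination (a supremum, infimum, or indexed union over outcomes of number variables) of evaluations at strictly shallower ancestors, all of which are measurable by the inductive hypothesis. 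Once this measurability lemma is in place, condition (c) of Definition~\ref{def:mtbn} holds, and Theorem~\ref{thm:represent} yields the unique measure $\mu$ on $\X_{BRV(\mc{M})}$, completing the proof.
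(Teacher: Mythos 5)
Your proposal is correct and follows essentially the same route as the paper: the paper's entire argument is to observe that the construction of $(V(\mc{M}), E(\mc{M}), \set{\X_v}, \kappa(\mc{M}))$ given just before the statement yields an MTBN and then to invoke \thmref{thm:represent} directly. You additionally spell out the verification that the objects in $\kappa(\mc{M})$ are genuine probability kernels (including the measurability of expression evaluation), which the paper leaves implicit by asserting that each statement defines its kernel ``in a straightforward manner''; this is a useful elaboration but not a different proof.
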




\begin{figure}[tb]
	\begin{scriptsize}
		\begin{verbatim}
		1 fixed Real sigma = 1.0; // stddev of observation
		2 random Real FakeCoinDiff ~
		3  TruncatedGaussian(0.5, 1, 0.1, 1);
		4 random Bool hasFakeCoin ~ BooleanDistrib(0.5);
		5 random Real obsDiff ~ if hasFakeCoin 
		6   then Gaussian(FakeCoinDiff, sigma*sigma)
		7   else Mix({ 0 -> 1.0 });
		8 obs obsDiff = 0;
		9 query hasFakeCoin;
		\end{verbatim}
	\end{scriptsize}
	\vspace{-0.5em}
	\caption{\small BLOG code for the Scale example}\label{fig:coin_blog}
	  \vspace{-0.5em}
\end{figure}

\section{Experiment Results}\label{sec:expr}
\begin{figure*}[ht]
\centering
\subfigure[GPA model]{  \label{fig:gpa_plot}
   \includegraphics[width=0.3\textwidth]{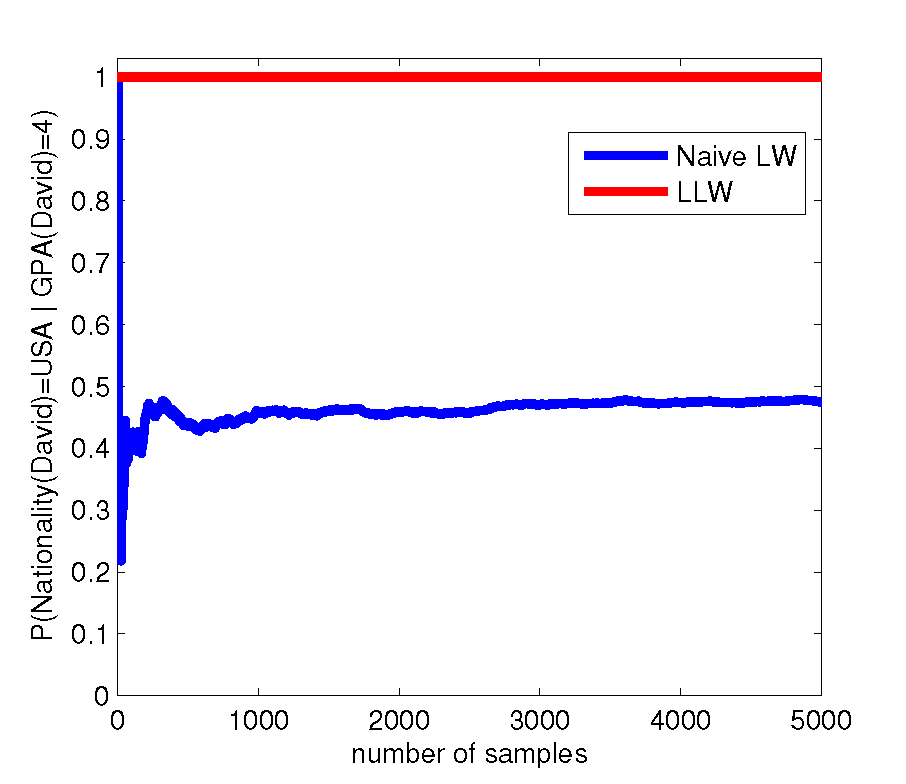}
}
\subfigure[Scale model]{  \label{fig:coin_plot}
   \includegraphics[width=0.3\textwidth]{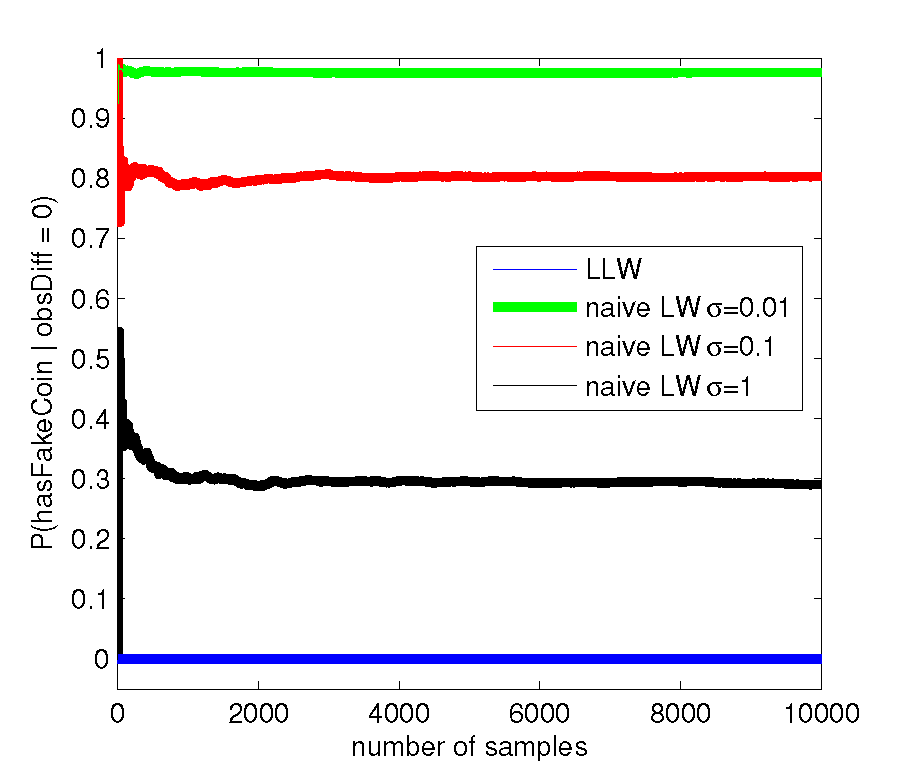}
}
\subfigure[Aircraft-Tracking model]{  \label{fig:track_plot}
   \includegraphics[width=0.3\textwidth]{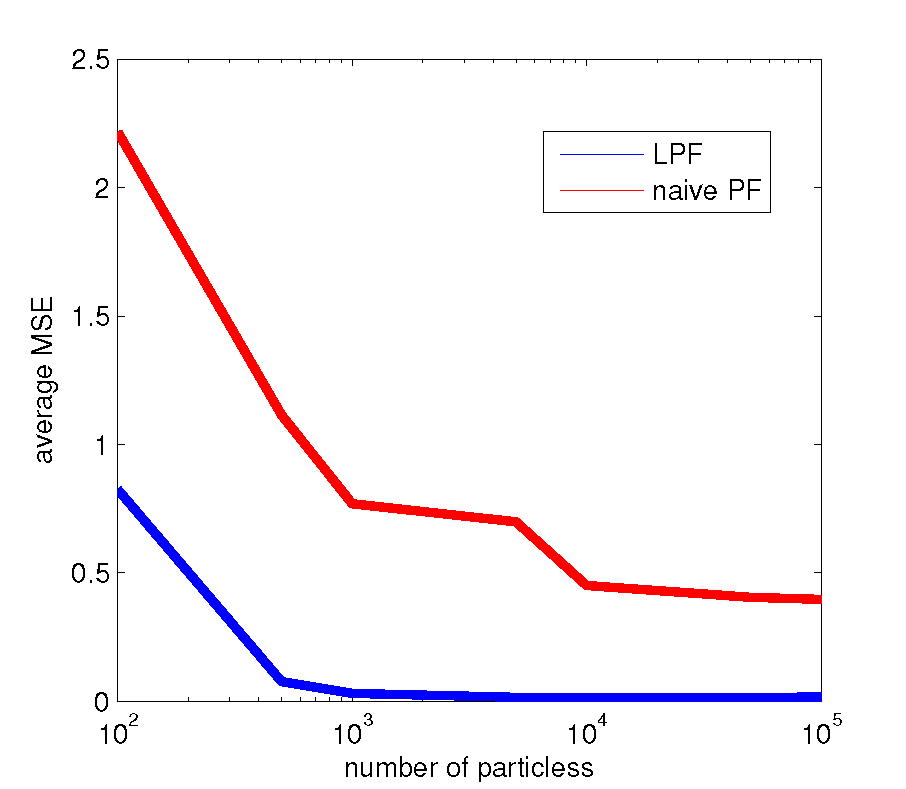}
}

\caption{\small Experiment results on (a) the GPA model, (b) the noisy scale model and (c) the aicraft-tracking model.}

\end{figure*}

We implemented the measure-theoretic extension of BLOG and evaluated our inference algorithms on three models where naive algorithms fail: (1) the GPA model (GPA); (2) the noisy scale model (Scale); and (3) a SSM, the aircraft tracking model (Aircraft-Tracking). The implementation is based on BLOG's C++ compiler~\cite{yiIJCAI16}.

\noindent\textbf{GPA model: } Fig.~\ref{fig:gpa_blog} presents the BLOG code for the GPA example as explained in Sec.~\ref{sec:blog}. Since the GPA of David is exactly 4, Bayes rule implies that David must be from USA. We evaluate LLW and the naive LW on this model in Fig~\ref{fig:gpa_plot}, where the naive LW converges to an incorrect posterior.

\noindent\textbf{Scale model: } In the noisy scale example (Fig.~\ref{fig:coin_blog}), we have an even number of coins and there might be a fake coin among them (Line 4). The fake coin will be slightly heavier than a normal coin (Line 2-3). We divide the coins into two halves and place them onto a noisy scale. When there is no fake coin, the scale always balances (Line 7). When there is a fake coin, the scale will noisily reflect the weight difference with standard deviation $\sigma$ (\texttt{sigma} in Line 6). Now we observe that the scale is  balanced (Line 8) and we would like to infer whether a fake coin exists. We again compare LLW against the naive LW with different choices of the $\sigma$ parameter in Fig.~\ref{fig:coin_plot}. Since the scale is precisely balanced, there must not be a fake coin. LLW always produces the correct answer but naive LW converges to different incorrect posteriors for different values of $\sigma$; as $\sigma$ increases, naive LW’s result approaches the true posterior.

\noindent\textbf{Aircraft-Tracking model: } Fig.~\ref{fig:track_blog} shows a simplified BLOG model for the aircraft tracking example. In this state-space model, we have $N=6$ radar points (Line 1) and a single aircraft to track. Both the radars and the aircraft are considered as points on a 2D plane. The prior of the aircraft movement is a Gaussian process (Line 3 to 6). Each radar $r$ has an effective range \texttt{radius(r)}: if the aircraft is within the range, the radar will noisily measure the distance from the aircraft to its own location (Line 13); if the aircraft is out of range, the radar will almost surely just output its radius (Line 10 to 11). Now we observe the measurements from all the radar points for $T$ time steps and we want to infer the location of the aircraft. 
With the measure-theoretic extension, a generalized BLOG program is more expressive for modeling  truncated sensors: if a radar outputs exactly its radius, we can surely infer that the aircraft must be out of the effective range of this radar. However, this information cannot be captured by the original BLOG language. To illustrate this case, we manually generated a synthesis dataset of $T=8$ time steps\footnote{The full BLOG programs with complete data are available at \url{https://goo.gl/f7qLwy}.} and evaluated LPF against the naive particle filter with different numbers of particles in Fig.~\ref{fig:track_plot}. We take the mean of the samples from all the particles as the predicted aircraft location. Since we know the ground truth, we measure the average mean square error between the true location and the prediction. LPF accurately predicts the true locations while the naive PF converges to the incorrect results.

\begin{figure}[bt]
\begin{scriptsize}
\begin{verbatim}
1  type t_radar; distinct t_radar R[6];
2  // model aircraft movement
3  random Real X(Timestep t) ~ if t == @0 
4    then Gaussian(2, 1) else Gaussian(X(prev(t)), 4);
5  random Real Y(Timestep t) ~ if t == @0 
6    then Gaussian(-1, 1) else Gaussian(Y(prev(t)), 4);
7  // observation model of radars
8  random Real obs_dist(Timestep t, t_radar r) ~
9    if dist(X(t),Y(t),r) > radius(r) then
10     mixed({radius(r)->0.999,
11    	TruncatedGauss(radius(r),0.01,0,radius(r))->0.001})
12   else
13     TruncatedGauss(dist(X(t),Y(t),r),0.01,0,radius(r));
14 // observation and query
15 obs obs_dist(@0, R[0]) = ...;
16 ... // evidence numbers omitted
17 query X(t) for Timestep t;
18 query Y(t) for Timestep t;
\end{verbatim}
\end{scriptsize}

\caption{\small BLOG code for the  Aircraft-Tracking example}\label{fig:track_blog}

\end{figure}
\section{Conclusion}\label{sec:con}
\hide{
In this paper, we proposed the measure-theoretic Bayesian network, a general framework to generalize existing PPL semantics to support random variables over arbitrary measure spaces. We then developed provably correct inference algorithms to handle discrete-continuous mixtures. We also incorporate MTBNs into a widely used PPL, BLOG, by a simple syntax extension and implement the new algorithms into the generalized BLOG language, which makes the system practical for a much larger domain of applications.
}

We presented a new formalization, measure-theoretic Bayesian networks, for generalizing the semantics of PPLs to include random variables with mixtures of discrete and continuous distributions. We developed provably correct inference algorithms for such random variables and incorporated MTBNs into a widely used PPL, BLOG. 
We believe that together with the
foundational inference algorithms, our proposed rigorous framework will
facilitate the development of powerful techniques for probabilistic
reasoning in practical applications from a much wider range of scientific areas.

\subsection*{Acknowledgment}
\small{
This work is supported by the DARPA PPAML program, contract FA8750-14-C-0011. Simon S. Du is funded by NSF grant IIS1563887,  AFRL grant FA8750-17-2-0212 and DARPA D17AP00001.}

\newpage

\bibliography{mtbn}
\bibliographystyle{icml2018}

\newpage

\twocolumn[
\icmltitle{Discrete-Continuous Mixtures in Probabilistic Programming:\\
	Generalized Semantics and Inference Algorithms\\
	Supplementary Materials}

\vskip 0.3in ]

\appendix

\section{Background on Measure-theoretical Probability Theory}\label{sec:back}

We assume familiarity with measure-theoretic approaches to probability
theory, but provide the fundamental definitions. The standard Borel $\sigma$-algebra is assumed in all the discussion. See
\cite{durrett} and \cite{kallenberg} for introduction and further
details.

A \keyword{measurable space} $(X, \X)$ (space, for short) is an underlying set
$X$ paired with a $\sigma$-algebra $\X\subseteq 2^X$ of measurable
subsets of $X$, i.e., a family of subsets containing the underlying
set $X$ which is closed under complements and countable unions.  We'll
denote the measurable space simply by $\X$ where no ambiguity
results. A function $f\maps\X\to\Y$ between measurable spaces is
measurable if measurable sets pullback to measurable sets:
$f^{-1}(B)\in\X$ for all $B\in\Y$. A \keyword{measure} $\mu$ on a
measurable space $\X$ is a function $\mu\maps\X\to[0,\infty]$ which
satisfies countable additivity: for any countable sequence $A_1,
A_2,\dots\in\X$ of disjoint measurable sets $\mu(\cup_{i} A_i) =
\sum_i \mu(A_i)$. $\P_\mu[S]$ denotes the probability
of a statement $S$ under the base measure $\mu$, and similarly for conditional probabilities.
A probability kernel is the measure-theoretic
generalization of a conditional distribution. It is commonly used to
construct measures over a product space, analogously to how
conditional distributions are used to define joint distributions in
the chain rule.

\begin{definition}
	A \keyword{probability kernel} $K$ from one measurable space $\X$ to another $\Y$
	is a function $K\maps X\times\Y\to[0,1]$ such that (a) for every
    $x\in X$, $K(x,\cdot)$ is a probability measure over $\Y$, and (b)
	for every $B\in\Y$, $K(\cdot,B)$ is a measurable function from $\X$ to $[0,1]$.
\end{definition}

Given an arbitrary index set $T$ and spaces $\X_t$ for each index $t
\in T$, the \keyword{product space} $\X = \prod_{t\in T} \X_t$ is the
space with underlying set $X = \prod_{t\in T} X_t$ the Cartesian
product of the underlying sets, adorned with the smallest
$\sigma$-algebra such that the projection functions
$\pi_t\maps\X\to\X_t$ are measurable.


\hide{
\paragraph{First-Order Logic}
\hide{We define open-universe probabilistic models using the
BLOG language, whose syntax and semantics are based on typed
first-order logic.}  Given a set of types
$\mc{T}=\set{\tau_1, \ldots \tau_k}$, a first-order vocabulary is a
set of function symbols with their type signatures. Constant symbols
are represented as zero-ary function symbols and predicates as Boolean
functions.  Given a first-order vocabulary, a possible world
(``logical structure'' or a ``model structure'') is a tuple $\tuple{U,
  I}$ where the \emph{universe} $U=\tuple{U_1, \ldots, U_k}$ and each
$U_i$ is a set of elements of type $\tau_i\in\mc{T}$. The
\emph{interpretation} $I$ has, for each function symbol in the
vocabulary, a function of the corresponding type signature over $U$.}

\section{MTBNs Represent Unique Measures}\label{sec:proof}



We prove here \thmref{thm:represent}.
Its proof requires a series of intermediate results.
We first define a projective family of measures. This gives a way to recursively construct our measure $\mu$.
We define a notion of consistency such that every consistent projective family constructs a measure that $M$ represents.
We end by giving an explicit characterization of the unique consistent projective family, 
and thus of the unique measure $M$ represents.
The appendix contains additional technical material required in the proofs.

Intuitively, the main objective of this section is to show that an MTBN defines a unique measure that ``factorizes'' according to the network, as an extension to the corresponding result for Bayes Nets.

\subsection{Consistent projective family of measures}


Let $K$ be a kernel from $\X\to\Y$ and $L$ a kernel from $\Y\to\Z$.
Their composition $K\circ L$ (note the ordering!) is a kernel from $\X$ to $\Z$ defined 
for $x\in X$,\hide{ and} $C\in\Z$ by:
\begin{equation}\label{eq:kernel-composition}
	(K\circ L)(x,C) = \int K(x,dy) \int L(y,dz)\,1_C(z).	
\end{equation}
To allow uniform notation, we will treat measurable functions and measures as special 
cases of kernels. A measurable function $f\maps X\to Y$ corresponds to the kernel 
$K_f$ from $\X$ to $\Y$ given by $K_f(x,B) = 1(f(x)\in B)$ for $x\in X$ and $B\in \Y$.
A measure $\mu$ on a space $\X$ is a kernel $K_\mu$ from $1$, the one element measure space, 
to $\X$ given by $K_\mu(\cdot,A) = \mu(A)$ for $A\in\X$.
Where this yields no confusion, we use $f$ and $\mu$ in place of $K_f$ and $K_\mu$.
\eqref{eq:kernel-composition} simplifies if the kernels are measures or functions.
Let
	$\mu$ be a measure on $\Y_1$,
	$K$ be a kernel from $\X_1$ to $\Y_1$,
	$f$ be a measurable function from $\X_2$ to $\X_1$, and
 	$g$ be a measurable function from $\Y_1$ to $\Y_2$.
Then $\mu \circ g$ is a measure on $\Y_2$ and $f\circ K \circ g$ is a kernel from $\X_2$ to $\Y_2$ with:
	$(\mu \circ g)(B) = \mu(g^{-1}(B))$, and
	$(f\circ K \circ g)(x,B) = K(f(x), g^{-1}(B))$.

Let $\Lambda$ denote the class of upwardly closed sets: subsets of $V$ containing all their elements' parents.

\begin{definition}\label{def:projective-family}
	A \keyword{projective family} of measures is a family $\{\mu_U : U\in\Lambda\}$ 
	consisting of a measure $\mu_U$ on $\X_U$ for every $U\in\Lambda$ such
	that whenever $W\subseteq U$ we have $\mu_{W} = \mu_{U} \circ \pi^U_W$,
	i.e., for all $A\in\X_W$, $\mu_W(A) = \mu_U((\pi^U_W)^{-1}(A))$.
\end{definition}	
	
Def.~\ref{def:projective-family} captures the measure-theoretic version of the probability of a subset of variables being equal to the marginals obtained while ``summing out'' the probabilities of the other variables in a joint distribution. 
\hide{We denote such a set of measures as a projective family.}


\begin{definition}\label{def:measure-kernel-product}
	Let $\mu$ be a measure on a measure space $\X$, and $K$ a kernel from $\X$ to a measure space $\Y$. 
	Then $\mu\otimes K$ is the measure on $\X\times\Y$ 
	defined for $B\in\X\otimes\Y$ by:
		$(\mu\otimes K)(B) = \int \mu(dx) \int K(x,dy)\,1_B(x,y)$.
\end{definition}

Def.~\ref{def:measure-kernel-product} defines the operation of composing a conditional probability with a prior on a parent, to obtain the corresponding joint distribution.

\begin{definition}\label{def:kernel}
	Let $K_w$ for $w\in W$ be kernels from $\X_U$ to $\X_{\{w\}}$.
	Denote by $\prod_{w\in W} K_w$ the kernel from $\X_U$ to $\X_W$
	defined for each $x_U\in \X_U$ by the infinite product of measures:
		$\left(\prod_{w\in W} K_w\right)(x_U,\cdot) = \otimes_{w\in W} K_w(x_U, \cdot)$.
\end{definition}

See \cite{kallenberg} 1.27 and 6.18 for definition and existence of infinite products of measures.
Def.~\ref{def:kernel} captures the kernel representation for taking the equivalent of products of conditional distributions of a set of variables with a common set $U$ of parents.

\begin{definition}\label{def:consistent-family}
	A projective family $\{\mu_U:U\in\Lambda\}$ is \keyword{consistent with $M$} 
	if for any $W,U\in\Lambda$ such that $W\subset U$ and $\pa(U)\subseteq W$, then:
		$\mu_U = \mu_W \otimes \prod_{u\in U\setminus W} (\pi^W_{\pa(u)} \circ K_u)$.
\end{definition}

Consistency in Def.~\ref{def:consistent-family} captures the global condition that we would like to see in a generalization of a Bayes network. Namely, the distribution of any set of parent-closed random variables should ``factorize'' according to the network


A projective family $\{\mu_U:U\in\Lambda\}$ is consistent with $M$ exactly when $M$ represents $\mu_V$:

\begin{lemma}\label{lem:mtbn-represents}
	Let $\mu$ be a measure on $\X_V$, and define the projective family $\{\mu_U:U\in\Lambda\}$  by $\mu_U = \mu\circ\pi^V_U$. 
	This projective family is consistent with $M$ iff $M$ represents $\mu$.
\end{lemma}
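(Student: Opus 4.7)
The plan is to prove the two directions of the iff separately, with both relying on a pair of graph-theoretic facts about $\Lambda$, $\pa$, and $\nd$ that I would establish first. Fact (i): for every $v\in V$ the set $\nd(v)$ lies in $\Lambda$, and $\pa(v)\subseteq\nd(v)$; the first part is immediate since $v\mapsto w'$ together with $w'\to w$ would give $v\mapsto w$, and the second uses well-foundedness to forbid the cycle $v\mapsto w\to v$. Fact (ii): whenever $W\subset U$ are both in $\Lambda$ with $\pa(U)\subseteq W$, every $u\in U\setminus W$ satisfies $W\cup(U\setminus W\setminus\{u\})\subseteq\nd(u)$. Both halves of (ii) follow by the same template: a directed path $u\mapsto x$ landing in $W$ or in $U$ forces its terminal parent into $W$ (using $\pa(U)\subseteq W$ in the latter case) and then, by upward closedness of $W$, propagates backwards to drag $u$ itself into $W$, contradicting $u\notin W$.

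For the $(\Leftarrow)$ direction (consistency implies representation), I fix $v\in V$ and specialize the consistency identity to $W=\nd(v)$, $U=\nd(v)\cup\{v\}$. Fact (i) gives $W,U\in\Lambda$ and $\pa(U)=\pa(\nd(v))\cup\pa(v)\subseteq\nd(v)=W$, so Def.~\ref{def:consistent-family} collapses to $\mu_U=\mu_W\otimes(\pi^W_{\pa(v)}\circ K_v)$. Unpacking this product via Def.~\ref{def:measure-kernel-product} simultaneously delivers both clauses of Def.~\ref{def:mtbn-represents}: that $K_v(X_{\pa(v)},\cdot)$ is a version of $\P_\mu[X_v\in\cdot\mid X_{\nd(v)}]$, and that this conditional distribution depends on $X_{\nd(v)}$ only through $X_{\pa(v)}$, which is exactly the required conditional independence.

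For the $(\Rightarrow)$ direction, I fix $W\subset U$ in $\Lambda$ with $\pa(U)\subseteq W$. For each $u\in U\setminus W$, Fact (ii) places $W\cup(U\setminus W\setminus\{u\})$ inside $\nd(u)$, so the representation property of $M$ yields
\begin{equation*}
  \P_\mu\bigl[X_u\in\cdot\mid X_{W\cup(U\setminus W\setminus\{u\})}\bigr]
    = K_u(X_{\pa(u)},\cdot)\quad\text{a.s.}
\end{equation*}
Hence, conditional on $X_W$, the variables $\{X_u:u\in U\setminus W\}$ are mutually independent, each distributed according to $K_u(X_{\pa(u)},\cdot)$. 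When $U\setminus W$ is finite, the consistency identity now follows by peeling variables off one at a time via the chain rule and Def.~\ref{def:measure-kernel-product}.

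The main obstacle is the general case in which $U\setminus W$ is infinite (possibly uncountable), where $\prod_{u\in U\setminus W}(\pi^W_{\pa(u)}\circ K_u)$ denotes the infinite product kernel of Def.~\ref{def:kernel}. My plan here is to reduce to the finite case by a uniqueness argument: both $\mu_U$ and the proposed right-hand side are probability measures on $\X_U$ whose finite-dimensional marginals indexed by $W\cup F$ (for $F\subseteq U\setminus W$ finite) agree, by the finite version of the argument just given together with the projectivity of $\{\mu_U\}_{U\in\Lambda}$. Since the product $\sigma$-algebra on $\X_U$ is generated by the $\pi$-system of finite cylinders, a Kolmogorov-style uniqueness/extension theorem (as in \cite{kallenberg}) then forces equality of the two measures on all of $\X_U$. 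Carrying out this uncountable extension cleanly, invoking the existence results for infinite products cited below Def.~\ref{def:kernel}, is the technical crux of the whole argument.
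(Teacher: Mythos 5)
Your proposal is correct and follows essentially the same route as the paper's proof: both translate the consistency identity into the statement that each $K_u$ is a version of the conditional distribution of $X_u$ given $X_{\pa(u)}$ together with mutual conditional independence, handle finitely many variables by peeling them off one at a time, and pass to infinite $U\setminus W$ by a finiteness principle (you use agreement on finite cylinders plus a $\pi$-system uniqueness argument where the paper cites Kallenberg's chaining of conditional independence --- the same monotone-class fact in different clothing). You are in fact more explicit than the paper in two useful places: the graph-theoretic facts relating $\Lambda$ and $\pa$ to $\nd$, and the choice $W=\nd(v)$, $U=\nd(v)\cup\{v\}$ that extracts both clauses of the representation definition from consistency.
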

\begin{proof}
	First we'll relate consistency (Def. 8) 
	with conditional expectation and distribution properties of random variables.
	Take any $W,U\in\Lambda$ such that $W\subset U$ and $\pa(U)\subseteq W$
	and observe that the following are equivalent:
	\items{
		\item $\mu_U = \mu_W \otimes \prod_{u\in U\setminus W} (\pi^W_{\pa(u)} \circ K_u)$ 	
		\item $\prod_{u\in U\setminus W} (\pi^W_{\pa(u)} \circ K_u)$ 
		is a version of the conditional distribution of $X_{U\setminus W}$ given $X_W$,
		\item $K_u$ is a version of the conditional distribution of $X_u$ given $X_{\pa(u)}$ for all $u\in U\setminus W$,
		and $\{X_W, X_u : u\in U\setminus W\}$ are mutually independent conditional on $X_{\pa(U)}$.
	}
	
	The forward direction is straightforward. 
	For the converse we use the fact that conditional independence of families of random variables
	holds if it holds for all finite subsets, establishing that by chaining conditional independence 
	(see \cite{kallenberg} p109 and 6.8).
\end{proof}

Lemma~\ref{lem:mtbn-represents} shows that Def.~\ref{def:consistent-family} follows iff an MTBN represents the joint distribution -- in other words, it follows iff the local Markov property holds.

\subsection{There exists a unique consistent family}

Each vertex $v\in V$ is assigned the unique minimal ordinal 
$d(v)$ such that $d(u)<d(v)$ whenever $(u,v)\in E$ (see \cite{Jech:2003} for an introduction to ordinals).
For any $U\in\Lambda$ denote by $U^\alpha = \{u\in U : v(u)<\alpha\}$ the restriction of $U$ to vertices of depth less than $\alpha$.
Defining $D=\sup_{v\in V} (d(v)+1)$, the least strict upper bound on depth, we have that $U^D = U$ for all $U\in\Lambda$.
In the following, fix a limit ordinal $\lambda$.

\begin{definition}\label{def:projective-measures}
	$\{\nu_\alpha : \alpha<\lambda\}$ is a \keyword{projective sequence of measures} on $\X_{U_\alpha}$
	if whenever $\alpha<\beta<\lambda$ we have $\nu_\alpha = \nu_\beta \circ \pi^{U_\beta}_{U_\alpha}$.	
\end{definition}
\hide{10}

Def.~\ref{def:projective-measures} generalizes the notion of subset relationships and the marginalization operations that hold between supersets and subsets to the case of infinite dependency chains

\begin{definition}\label{def:measure-limit}
	The limit $\lim_{\alpha<\lambda}\nu_\alpha$
	of a projective sequence $\{\nu_\alpha : \alpha<\lambda\}$ of measures
	is the unique measure on $\X_U$ 
	such that $\nu_\alpha = (\lim_{\alpha<\lambda}\nu_\alpha) \circ \pi^{U}_{U_\alpha}$ for all $\alpha<\beta$.	
\end{definition}


\begin{definition}\label{def:muU}
	Given any $U\in\Lambda$, inductively define a measure $\mu_U^\alpha$ on $\X_{U^\alpha}$ by
	\begin{align*}
		\mu_U^0	&= 1, \\
		\mu_U^{\alpha+1} &= \mu^{\alpha}_U \otimes\prod_{v\in U:d(v)=\alpha} (\pi^{U^\alpha}_{\pa(v)} \circ K_v), \\
		\mu_U^{\lambda} &= \lim_{\alpha<\lambda} \mu_U^\alpha \qquad\text{if $\lambda$ is a limit ordinal.}
	\end{align*}
	$\mu_U^\alpha$ stabilizes for $\alpha\ge D$ to define a measure on $\X_U$.
\end{definition}

The above definition is coherent as $\mu_U^{\alpha}$ can be inductively shown to be a projective sequence.
Lemma~\ref{lem:projective} and~\ref{lem:consistent} allow us to show
in \thmref{lem:unique-consistent-family} that $\{\mu_U^D : U\in\Lambda\}$ is the unique consistent projective family of measures.
\hide{This combines with Lemma~\ref{lem:mtbn-represents} to prove \thmref{thm:represent}.}

\begin{lemma}\label{lem:projective}
	If $W\subseteq U$ for $W,U\in\Lambda$, then for all $\alpha$: 
		$\mu_W^\alpha = \mu_U^\alpha \circ \pi^{U^\alpha}_{W^\alpha}.$
\end{lemma}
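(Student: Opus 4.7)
The plan is to prove this by transfinite induction on $\alpha$. The base case $\alpha=0$ is immediate, since $U^0 = W^0 = \emptyset$, the spaces $\X_{U^0}$ and $\X_{W^0}$ are both the one-element space, and $\mu_U^0 = \mu_W^0 = 1$, with the projection being the identity. The substance is in the successor and limit steps.

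For the successor step, assume $\mu_W^\alpha = \mu_U^\alpha \circ \pi^{U^\alpha}_{W^\alpha}$. Using the definition of $\mu_U^{\alpha+1}$ together with the partition
$\{v \in U : d(v) = \alpha\} = \{v \in W : d(v) = \alpha\} \cup (\{v \in U\setminus W : d(v) = \alpha\})$, I would split the kernel product into two factors (justified by the definition of the infinite product of probability measures, Def.~\ref{def:kernel}, and Fubini-style manipulations for kernels). Since each $K_v$ is a probability kernel, marginalizing the $U\setminus W$ factor out collapses it to 1, leaving
$$\mu_U^{\alpha+1} \circ \pi^{U^{\alpha+1}}_{W^{\alpha+1}} = (\mu_U^\alpha \circ \pi^{U^\alpha}_{W^\alpha}) \otimes \prod_{v \in W : d(v) = \alpha} (\pi^{W^\alpha}_{\pa(v)} \circ K_v),$$
using crucially that $W \in \Lambda$ implies $\pa(v) \subseteq W^\alpha$ for every $v \in W$ with $d(v)=\alpha$, so the inductive hypothesis can be invoked to replace the projected $\mu_U^\alpha$ by $\mu_W^\alpha$, yielding exactly $\mu_W^{\alpha+1}$.

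For the limit step at a limit ordinal $\lambda$, by Def.~\ref{def:measure-limit} we have $\mu_U^\lambda$ uniquely characterized by the relations $\mu_U^\alpha = \mu_U^\lambda \circ \pi^{U^\lambda}_{U^\alpha}$ for all $\alpha < \lambda$, and similarly for $\mu_W^\lambda$. Applying the inductive hypothesis at each $\alpha < \lambda$ and composing projections, the measure $\mu_U^\lambda \circ \pi^{U^\lambda}_{W^\lambda}$ satisfies $\mu_W^\alpha = (\mu_U^\lambda \circ \pi^{U^\lambda}_{W^\lambda}) \circ \pi^{W^\lambda}_{W^\alpha}$ for every $\alpha < \lambda$, so by uniqueness of the projective limit it must coincide with $\mu_W^\lambda$.

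The main obstacle will be the successor step, specifically justifying the factorization of the infinite kernel product along the partition of depth-$\alpha$ vertices and the fact that projecting out a family of probability kernels yields 1. This requires care because the index set may be uncountable, so I would appeal to the existence of infinite products of probability measures (the reference already cited to \cite{kallenberg} 1.27 and 6.18) and to the measure-kernel product of Def.~\ref{def:measure-kernel-product} being associative/Fubini-compatible in the appropriate sense. The limit step, by contrast, is essentially formal once one has a clean notion of projective limit of measures.
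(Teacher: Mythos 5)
Your proposal is correct and follows essentially the same route as the paper's proof: transfinite induction with a trivial base case, a successor step that splits the depth-$\alpha$ kernel product along $W$ versus $U\setminus W$ and marginalizes out the latter (the paper packages this as \lemref{lem:product-projection}, together with \lemref{lem:product-exchange} and \lemref{lem:product-compose} for the Fubini-style and projection-compatibility manipulations you describe), and a limit step via uniqueness of the projective limit. You have also correctly identified the one genuinely delicate point, namely justifying the factorization and collapse of the (possibly uncountable) kernel product, which the paper likewise defers to the cited product-of-measures machinery.
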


Proof is in Appx.~\ref{sec:proof_lemma13}.

\begin{lemma}\label{lem:consistent}
	If $W\subset U$ where $W,U\in\Lambda$, and if $\pa(U)\subseteq W$, 
	then $W^\alpha\subset U^\alpha$, $\pa(U^\alpha)\subseteq W^\alpha$, and
		$\mu_U^\alpha = \mu_W^\alpha \otimes \prod_{u\in U^\alpha\setminus W^\alpha} (\pi^{W^\alpha}_{\pa(u)} \circ K_u).$
\end{lemma}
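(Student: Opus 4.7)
The plan is to prove the set-theoretic assertions directly and then to establish the measure identity by transfinite induction on $\alpha$, mirroring the three-part definition of $\mu_U^\alpha$ in Def.~\ref{def:muU}. The inclusion $W^\alpha\subseteq U^\alpha$ is immediate from $W\subseteq U$ together with the depth cutoff. For $\pa(U^\alpha)\subseteq W^\alpha$, any $u\in U^\alpha$ has $d(u)<\alpha$ and each parent $p$ of $u$ satisfies $d(p)<d(u)<\alpha$; the hypothesis $\pa(U)\subseteq W$ then forces $p\in W$, so $p\in W^\alpha$.

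For the induction itself, the base case $\alpha=0$ is trivial since $U^0=W^0=\emptyset$, both measures equal the unit measure, and the empty $\otimes$-product is the identity. For the successor step $\alpha=\beta+1$, I would partition the depth-$\beta$ vertices of $U$ as $A\sqcup B$ with $A=\{v\in W:d(v)=\beta\}$ and $B=\{v\in U\setminus W:d(v)=\beta\}$, so that $W^{\beta+1}=W^\beta\sqcup A$ and $U^{\beta+1}\setminus W^{\beta+1}=(U^\beta\setminus W^\beta)\sqcup B$. Expanding $\mu_U^{\beta+1}$ by Def.~\ref{def:muU} and substituting the inductive hypothesis for $\mu_U^\beta$ produces an iterated $\otimes$-product of $\mu_W^\beta$ with kernels indexed by $(U^\beta\setminus W^\beta)\cup A\cup B$. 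The key point is that each $K_u$ in this product depends only on $\X_{\pa(u)}$ with $\pa(u)\subseteq W^\beta$ by the first part of the lemma; this lets the $A$-factors be absorbed into $\mu_W^\beta$ to form $\mu_W^{\beta+1}$, while the remaining factors reassemble as the product over $U^{\beta+1}\setminus W^{\beta+1}$ with projections rebased from $W^\beta$ to $W^{\beta+1}$ (valid since $\pa(u)\subseteq W^\beta\subseteq W^{\beta+1}$). For a limit ordinal $\alpha=\lambda$, Def.~\ref{def:measure-limit} together with \lemref{lem:projective} reduces both sides to projective limits of their $\beta<\lambda$ truncations; the truncations agree by the inductive hypothesis, so uniqueness of the projective limit gives the equality.

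The main obstacle I expect is the bookkeeping in the successor step: rigorously justifying the rearrangement of iterated $\otimes$-products, which amounts to invoking Fubini's theorem for probability kernels together with associativity of the measure-kernel product across disjoint coordinate blocks. Once this rearrangement is made precise, the base case is vacuous and the limit case follows from a standard projective-limit argument using \lemref{lem:projective}.
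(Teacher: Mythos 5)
Your proposal is correct and follows essentially the same route as the paper's proof: transfinite induction on $\alpha$ using Def.~\ref{def:muU}, with the successor step performed by splitting the depth-$\alpha$ vertices into the $W$-part (absorbed into $\mu_W^{\alpha+1}$) and the rest (reassembled into the product over $U^{\alpha+1}\setminus W^{\alpha+1}$ after rebasing projections, justified by $\pa(u)\subseteq W^\alpha$), which is exactly the paper's double application of Lemmas~\ref{lem:otimes-prod} and~\ref{lem:prod-prod}. The limit step you describe via uniqueness of projective limits is equivalent to the paper's verification that the kernel products form a projective sequence whose limit is the product over $U^\lambda\setminus W^\lambda$.
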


Proof is in Appx.~\ref{sec:proof_lemma14}.

Using the above, the following shows MTBNs satisfy the properties (1-3) mention in the beginning of Sec.~\ref{sec:contrib}:

\begin{theorem}\label{lem:unique-consistent-family}
	$\{\mu_U^D : U\in\Lambda\}$ is the unique projective family of measures consistent with $M$.
\end{theorem}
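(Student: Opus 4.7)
The plan has two parts: existence (that $\{\mu_U^D\}$ is a consistent projective family) and uniqueness (that any consistent projective family must equal it).

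For existence, I would simply invoke Lemmas~\ref{lem:projective} and~\ref{lem:consistent} at $\alpha = D$. Since $U^D = U$ for every $U \in \Lambda$, Lemma~\ref{lem:projective} immediately gives $\mu_W^D = \mu_U^D \circ \pi^U_W$ whenever $W \subseteq U$ in $\Lambda$, which is exactly the projectivity condition of Def.~\ref{def:projective-family}. Likewise Lemma~\ref{lem:consistent} at $\alpha = D$ gives the consistency condition of Def.~\ref{def:consistent-family}. So existence requires almost no additional work beyond the two lemmas already proved.

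For uniqueness, let $\{\nu_U : U \in \Lambda\}$ be any projective family consistent with $M$. The strategy is transfinite induction on $\alpha$: I would prove the claim that for every $U \in \Lambda$ and every ordinal $\alpha$, the set $U^\alpha = \{u \in U : d(u) < \alpha\}$ lies in $\Lambda$ (because parents have strictly smaller depth, so $U^\alpha$ inherits upward closure from $U$) and $\nu_{U^\alpha} = \mu_U^\alpha$. Taking $\alpha = D$ then forces $\nu_U = \mu_U^D$. The base case $\alpha = 0$ is trivial since both sides are the unique measure on the empty product. For the successor step from $\alpha$ to $\alpha + 1$, note that $U^{\alpha+1} \setminus U^\alpha = \{v \in U : d(v) = \alpha\}$ and $\pa(U^{\alpha+1}) \subseteq U^\alpha$; applying the assumed consistency of $\nu$ with $W = U^\alpha \subset U^{\alpha+1}$ gives
$$\nu_{U^{\alpha+1}} = \nu_{U^\alpha} \otimes \prod_{v \in U : d(v) = \alpha} (\pi^{U^\alpha}_{\pa(v)} \circ K_v),$$
and the inductive hypothesis $\nu_{U^\alpha} = \mu_U^\alpha$ matches this to the defining recursion for $\mu_U^{\alpha+1}$. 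For a limit ordinal $\lambda$, projectivity of $\{\nu_U\}$ yields $\nu_{U^\alpha} = \nu_{U^\lambda} \circ \pi^{U^\lambda}_{U^\alpha}$ for all $\alpha < \lambda$, and the inductive hypothesis identifies the left side with $\mu_U^\alpha$; by the uniqueness clause of Def.~\ref{def:measure-limit}, $\nu_{U^\lambda}$ must coincide with $\lim_{\alpha < \lambda} \mu_U^\alpha = \mu_U^\lambda$.

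The main obstacle is the limit step, which is where all the measure-theoretic content sits: one must check that $\nu_{U^\lambda}$ really does play the role of the projective limit, which relies on $\{U^\alpha\}_{\alpha < \lambda}$ being a cofinal family of parent-closed subsets of $U^\lambda$ and on the underlying existence/uniqueness result for projective limits already invoked in Def.~\ref{def:measure-limit}. Everything else (successor step, base case, restriction properties of $U^\alpha$) is bookkeeping once the two preceding lemmas are in hand.
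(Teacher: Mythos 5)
Your proposal matches the paper's own proof essentially step for step: existence via Lemmas~\ref{lem:projective} and~\ref{lem:consistent} at $\alpha=D$, and uniqueness by transfinite induction showing any consistent family $\hat{\mu}$ satisfies $\hat{\mu}_{U^\alpha}=\mu_U^\alpha$, with the successor step driven by the consistency condition applied to $U^\alpha\subset U^{\alpha+1}$ and the limit step by projectivity plus uniqueness of projective limits (Lemma~\ref{lem:measure-limit}). Your added remark that $U^\alpha$ inherits upward closure because parents have strictly smaller depth is a detail the paper leaves implicit, but the argument is the same.
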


Proof is in Appx.~\ref{sec:proof_theorem15}.

Intuitively, by Lemma~\ref{lem:projective} and Lemma~\ref{lem:consistent}, we assert that consistency holds for any ordinal-bounded (prefix in terms of parent ordering) sub-network. Then the main result, Thm.~\ref{lem:unique-consistent-family}, follows by setting this bound appropriately. 
Finally Lemma~\ref{lem:mtbn-represents} and Theorem~\ref{lem:unique-consistent-family} lead to Theorem~\ref{thm:represent}.

Note that combining Thm.~\ref{thm:represent} and Thm.~\ref{thm:blog} lead to all the 4 desired properties mentioned in Sec.~\ref{sec:contrib}.
\hide{All the proof details can be found in appendix.}

\hide{

	\begin{proof}
		That this is a consistent projective family follows from Lemmas~\ref{lem:projective} and~\ref{lem:consistent}
		since $U^D = U$ for all $U\in\Lambda$.

		For uniqueness, let $\{\hat{\mu}_U : U\in\Lambda\}$ be a consistent projective family of measures,
		any fix any $U\in\Lambda$. We'll show inductively that $\hat{\mu}_{U^\alpha} = \mu_U^\alpha$,
		and thus with $\alpha=D$ that $\hat{\mu}_U = \mu_U$, giving our result.
		This is trivial for $\alpha=0$, so inductively suppose it holds for $\alpha$. But then:
		\begin{align*}
			\hat{\mu}_{U^{\alpha+1}}
			&= \hat{\mu}_{U^{\alpha}} \otimes \prod_{u\in U^{\alpha+1}\setminus U^{\alpha}} (\pi^{U^{\alpha}}_{\pa(u)} \circ K_u) \\
			&= \mu_U^\alpha \otimes \prod_{u\in U^{\alpha+1}\setminus U^{\alpha}} (\pi^{U^{\alpha}}_{\pa(u)} \circ K_u).
		\end{align*}
		The first step by consistency of $\{\hat{\mu}_U\}$ (\defref{def:consistent-family}) 
			since $U^\alpha\subseteq U^{\alpha+1}$ and $\pa(U^{\alpha+1})\subseteq U^{\alpha}$,
		the second by inductive hypothesis,
		and the third by \defref{def:muU}.

		Let $\alpha$ be a limit ordinal.
		Since $\{\hat{\mu}_{U^\alpha}\}$ is a projective family and $U^\alpha = \bigcup_{\beta<\alpha} U^\beta$,
		by \lemref{lem:measure-limit} $\hat{\mu}_{U^\alpha} = \lim_{\beta<\alpha} \hat{\mu}_{U^\beta}$.
		By definition $\mu_U^\alpha = \lim_{\beta<\alpha} \mu_U^\beta$.
		Then since $\mu_U^\beta = \hat{\mu}_{U^\beta}$ for $\beta<\alpha$ inductively,
		$\mu_U^\alpha = \hat{\mu}_{U^\alpha}$ as the limit of this sequence is unique.
	\end{proof}
}

\hide{
Combining the above, we have our proof:
	\begin{proof}[Proof (\thmref{thm:represent})]
		Apply Theorems~\ref{lem:mtbn-represents} and~\ref{lem:unique-consistent-family}.
	\end{proof}	
}

\section{Proof for Lemma~\ref{lem:projective}}\label{sec:proof_lemma13}
\begin{proof}
	Proof by induction. Trivially true for $\alpha=0$, so suppose this holds for $\alpha$, and consider $\alpha+1$. Then:
	\begin{align*}
		\mu^{\alpha+1}_W
		&= \mu^{\alpha}_W \otimes\prod_{v\in W:d(v)=\alpha} (\pi^{W^\alpha}_{\pa(v)} \circ K_v) \\
		&= \left(\mu_U^\alpha\circ\pi^{U^\alpha}_{W^\alpha}\right) \\
		&\qquad \otimes \left( 	\left(\prod_{v\in U:d(v)=\alpha} (\pi^{W^\alpha}_{\pa(v)}\circ K_v)\right) 
							\circ \pi^{U^{\alpha+1}\setminus U^{\alpha}}_{W^{\alpha+1}\setminus W^\alpha} \right) \\
		&= \left(\mu_U^\alpha \otimes\left(\pi^{U^\alpha}_{W^\alpha} \circ \prod_{v\in U:d(v)=\alpha} (\pi^{W^\alpha}_{\pa(v)} \circ K_v) \right)\right)\\
		&\qquad		\circ (\pi^{U^\alpha}_{W^\alpha} \times \pi^{U^{\alpha+1}\setminus U^{\alpha}}_{W^{\alpha+1}\setminus W^\alpha}) \\
		&= 				\left(\mu^{\alpha}_U \otimes\prod_{v\in U:d(v)=\alpha} (\pi^{U^\alpha}_{\pa(v)}\circ K_v)\right) 
							\circ \pi^{U^{\alpha+1}}_{W^{\alpha+1}} \\
		&= \mu_U^{\alpha+1} \circ \pi^{U^{\alpha+1}}_{W^{\alpha+1}}
	\end{align*}
	The first step by Def. 12,
	the second by inductive hypothesis and \lemref{lem:product-projection}
		as $\{v\in W:d(v)=\alpha\} = W^{\alpha+1}\setminus W^{\alpha}$
		and $\{v\in U:d(v)=\alpha\} = U^{\alpha+1}\setminus U^{\alpha}$,
	the third by \lemref{lem:product-exchange},
	the fourth by \lemref{lem:product-compose} since $\pi^{U^\alpha}_{\pa(v)} = \pi^{W^\alpha}_{\pa(v)} \circ \pi^{U^\alpha}_{W^\alpha}$
		and by elementary properties of projections,
	and the fifth by \defref{def:muU}.

	Finally, suppose $\lambda$ is a limit ordinal. We need to show:
	\[
		\lim_{\alpha<\lambda}\left( \mu_U^\alpha \circ \pi^{U^\alpha}_{W^\alpha} \right)
		= \left(\lim_{\alpha<\lambda} \mu_U^\alpha\right) \circ \pi^{U^\lambda}_{W^\lambda}.
	\]
	This follows from \lemref{lem:measure-limit} because for all $\alpha<\lambda$ we have:
	\begin{align*}
		\left(\left(\lim_{\alpha<\lambda} \mu_U^\alpha\right) \circ \pi^{U^\lambda}_{W^\lambda}\right) \circ \pi^{W^\lambda}_{W^\alpha}
		&= \left(\left(\lim_{\alpha<\lambda} \mu_U^\alpha\right) \circ \pi^{U^\lambda}_{U^\alpha}\right) \circ \pi^{U^\alpha}_{W^\alpha}\\
		&= \mu^\alpha_U \circ \pi^{U^\alpha}_{W^\alpha}
	\end{align*}
	The first by properties of projections, the second by \lemref{lem:measure-limit} characterizing limits.
\end{proof}

\section{Proof for Lemma~\ref{lem:consistent}}\label{sec:proof_lemma14}
\begin{proof}
	Trivial for $\alpha=0$, so suppose this holds for $\alpha$, and consider $\alpha+1$. Then:
\begin{align*}
& \mu_U^{\alpha+1} \\
&= \mu^{\alpha}_U \otimes\prod_{v\in U:d(v)=\alpha} (\pi^{U^\alpha}_{\pa(v)} \circ K_v) \\
&= \mu_W^\alpha \otimes \prod_{u\in U^\alpha\setminus W^\alpha} (\pi^{W^\alpha}_{\pa(u)}\circ K_u) 
			\otimes\prod_{v\in U:d(v)=\alpha} (\pi^{U^\alpha}_{\pa(v)}\circ K_v) \\
&= \mu_W^\alpha \otimes \prod_{u\in U^{\alpha+1}\setminus W^\alpha} (\pi^{W^\alpha}_{\pa(u)}\circ K_u) \\
&= \mu^{\alpha}_W \otimes\prod_{v\in W:d(v)=\alpha} (\pi^{W^\alpha}_{\pa(v)}\circ K_v)\\
& \qquad\qquad\qquad\qquad\qquad	\otimes \prod_{u\in U^{\alpha+1}\setminus W^{\alpha+1}} (\pi^{W^{\alpha+1}}_{\pa(u)} \circ K_u) \\
&= \mu^{\alpha+1}_W \otimes \prod_{u\in U^{\alpha+1}\setminus W^{\alpha+1}} (\pi^{W^{\alpha+1}}_{\pa(u)} \circ K_u),
\end{align*}
The first step by \defref{def:muU},
	the second by inductive hypothesis.
	The third by Lemmas~\ref{lem:otimes-prod} and~\ref{lem:prod-prod}
	 	since $U^{\alpha+1}\setminus W^\alpha = U^{\alpha}\setminus W^{\alpha} \cup \{v\in U: d(v)=\alpha\}$ where the union is disjoint,
		and as $\pa(v)\subseteq W^\alpha$ when $v\in U$ and $d(v)=\alpha$
		implies that $\pi^{U^\alpha}_{\pa(v)} = \pi^{U^\alpha}_{W^\alpha} \circ \pi^{W^\alpha}_{\pa(v)}$.
	The fourth by Lemmas~\ref{lem:otimes-prod} and~\ref{lem:prod-prod} 
	 	since $U^{\alpha+1}\setminus W^\alpha = U^{\alpha+1}\setminus W^{\alpha+1} \cup \{v\in W: d(v)=\alpha\}$ where the union is disjoint,
		and as $\pa(u)\subseteq W^\alpha$ when $u\in U^{\alpha+1}\setminus W^{\alpha+1}$
		implies that $\pi^{W^{\alpha+1}}_{\pa(v)} = \pi^{W^{\alpha+1}}_{W^\alpha} \circ \pi^{W^\alpha}_{\pa(v)}$.
	Finally, the fifth by \defref{def:muU}.

	Finally, suppose $\lambda$ is a limit ordinal. 
	The result will follow from the inductive hypothesis, \defref{def:muU},
	and as limits preserve products \lemref{lem:measure-kernel-product-limit}
	if we can show that 
	\[
		\lim_{\alpha<\lambda}\prod_{u\in U^\alpha\setminus W^\alpha} (\pi^{W^\alpha}_{\pa(u)} \circ K_u)
		= 
		\prod_{u\in U^\lambda\setminus W^\lambda} (\pi^{W^\lambda}_{\pa(u)} \circ K_u).
	\]
	First we must show the limit on the left is well-defined.
	Note that the kernel inside the limit maps from $\X_{W^\alpha}$ to $\X_{U^\alpha\setminus W^\alpha}$.
	As $W^\alpha$ and $U^\alpha\setminus W^\alpha$ are both increasing sets, 
	we verify projective sequence property by taking any $\beta>\alpha$ and observing that
	\begin{align*}
		&\pi^{W^\beta}_{W^\alpha} \circ \prod_{u\in U^\alpha\setminus W^\alpha} (\pi^{W^\alpha}_{\pa(u)} \circ K_u)\\
		&= \prod_{u\in U^\alpha\setminus W^\alpha} (\pi^{W^\beta}_{\pa(u)} \circ K_u) \\
		&= \left(\prod_{u\in U^\beta\setminus W^\beta} (\pi^{W^\beta}_{\pa(u)} \circ K_u)\right) 
				\circ \pi^{U^\beta\setminus W^\beta}_{U^\alpha\setminus W^\alpha}
	\end{align*} 
	the first step from \lemref{lem:product-compose} and properties of projections,
	and the second from \lemref{lem:product-projection}.
	
	Finally, we must show the expression on the right satisfies the properties characterizing the limit.
	However, observe this follows from our demonstration of the projective sequence property above by
	simply replacing $\beta$ with $\lambda$.
\end{proof}

\section{Proof for Theorem~\ref{lem:unique-consistent-family}}\label{sec:proof_theorem15}
\begin{proof}
	That this is a consistent projective family follows from Lemmas~\ref{lem:projective} and~\ref{lem:consistent}
	since $U^D = U$ for all $U\in\Lambda$.
	
	For uniqueness, let $\{\hat{\mu}_U : U\in\Lambda\}$ be a consistent projective family of measures,
	any fix any $U\in\Lambda$. We'll show inductively that $\hat{\mu}_{U^\alpha} = \mu_U^\alpha$,
	and thus with $\alpha=D$ that $\hat{\mu}_U = \mu_U$, giving our result.
	This is trivial for $\alpha=0$, so inductively suppose it holds for $\alpha$. But then:
	\begin{align*}
		\hat{\mu}_{U^{\alpha+1}}
		&= \hat{\mu}_{U^{\alpha}} \otimes \prod_{u\in U^{\alpha+1}\setminus U^{\alpha}} (\pi^{U^{\alpha}}_{\pa(u)} \circ K_u) \\
		&= \mu_U^\alpha \otimes \prod_{u\in U^{\alpha+1}\setminus U^{\alpha}} (\pi^{U^{\alpha}}_{\pa(u)} \circ K_u).
	\end{align*}
	The first step by consistency of $\{\hat{\mu}_U\}$ (\defref{def:consistent-family}) 
		since $U^\alpha\subseteq U^{\alpha+1}$ and $\pa(U^{\alpha+1})\subseteq U^{\alpha}$,
	the second by inductive hypothesis,
	and the third by \defref{def:muU}.
	
	Let $\alpha$ be a limit ordinal.
	Since $\{\hat{\mu}_{U^\alpha}\}$ is a projective family and $U^\alpha = \bigcup_{\beta<\alpha} U^\beta$,
	by \lemref{lem:measure-limit} $\hat{\mu}_{U^\alpha} = \lim_{\beta<\alpha} \hat{\mu}_{U^\beta}$.
	By definition $\mu_U^\alpha = \lim_{\beta<\alpha} \mu_U^\beta$.
	Then since $\mu_U^\beta = \hat{\mu}_{U^\beta}$ for $\beta<\alpha$ inductively,
	$\mu_U^\alpha = \hat{\mu}_{U^\alpha}$ as the limit of this sequence is unique.
\end{proof}

\section{Proof of Lemma~\ref{lemma:unique_world}}\label{sec:proof_lemma_unique_world}
\begin{proof} The possible world $\tuple{U^\sigma, I^\sigma}$ is
  defined as follows.
  $U^\sigma = \tuple{U^\sigma_1, \ldots, U^\sigma_k}$, where
  $U_j^\sigma=\{c_j: c_j$ is a distinct constant of type $\tau_j$ in
  $\mc{M}\}$ $\cup$ $\set{u_{\nu, \bar{u}, l} \in \mc{U}_\mc{M}: \nu$
    is a number statement of type $\tau_j$ and $\sigma(V_\nu[\bar{u}])\ge l}$.

  $I^\sigma$ is defined as follows. For each function symbol
  $f(\bar{x})$ in $\mc{M}$, for each tuple $\bar{u}$ of the type of
  $\bar{x}$ constructed using elements of $U^\sigma$,
  $[f]^\sigma(\bar{u}) = \sigma(V_f[\bar{u}])$. The element
  $\sigma(V_f[\bar{u}])$ is a member of $U^\sigma$ because of the last
  clause in the definition of consistent assignments
  (Def.\,\ref{def:consistent_assignments}) and the construction of
  $U^\sigma$.
\end{proof}

\section{Additional Technical Details}

For reasons of space, we present the following without their (straightforward) proofs.

\begin{lemma}\label{lem:product-project}
If $\mu$ is a measure on $\X$, and is $K$ a kernel from $\X$ to $\Y$, then
	$(\mu\otimes K) \circ \pi^{\X\times\Y}_{\X} = \mu.$
\end{lemma}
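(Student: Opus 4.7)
The plan is to verify the equality of measures by unfolding the two definitions involved and reducing to the fact that $K(x,\cdot)$ is a probability measure for every $x\in X$. Specifically, by definition of the pushforward, I need to show that for every measurable $A\in\X$,
\begin{equation*}
	(\mu\otimes K)\bigl((\pi^{\X\times\Y}_{\X})^{-1}(A)\bigr) = \mu(A).
\end{equation*}
Note that $(\pi^{\X\times\Y}_{\X})^{-1}(A) = A\times Y$, and so the indicator function satisfies $1_{A\times Y}(x,y) = 1_A(x)$ for all $(x,y)\in X\times Y$.

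Next I would apply \defref{def:measure-kernel-product} to expand the left-hand side:
\begin{equation*}
	(\mu\otimes K)(A\times Y) = \int \mu(dx)\int K(x,dy)\,1_A(x).
\end{equation*}
Since the inner integrand $1_A(x)$ does not depend on $y$, it pulls out of the inner integral. Because $K$ is a probability kernel, $\int K(x,dy) = K(x,Y) = 1$ for every $x$, leaving $\int \mu(dx)\,1_A(x) = \mu(A)$, which completes the argument.

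There is no substantive obstacle here: the result is a direct consequence of Fubini-style manipulation with the kernel's normalization. The only mild care needed is in justifying the measurability of the inner integral as a function of $x$, which is already built into \defref{def:measure-kernel-product} (ensuring $\mu\otimes K$ is a well-defined measure), and in writing the projection preimage explicitly as $A\times Y$.
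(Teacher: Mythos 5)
Your proof is correct: the paper explicitly omits the proof of this lemma as ``straightforward,'' and your argument---unfolding the pushforward and \defref{def:measure-kernel-product}, noting $(\pi^{\X\times\Y}_{\X})^{-1}(A)=A\times Y$ so the integrand reduces to $1_A(x)$, and using the kernel normalization $K(x,Y)=1$---is exactly the intended direct computation. No gaps; the measurability remark at the end is the right (and only) point of care.
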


\begin{lemma}\label{lem:measure-limit}
	A projective sequence of measures has a unique limit.
\end{lemma}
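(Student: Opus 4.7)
The plan is to split into uniqueness and existence, noting that here ``limit'' means the unique measure $\nu$ on $\X_U$ (with $U = \bigcup_{\alpha<\lambda} U_\alpha$) such that $\nu \circ \pi^U_{U_\alpha} = \nu_\alpha$ for every $\alpha<\lambda$.

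For \emph{uniqueness}, I would invoke Dynkin's $\pi$--$\lambda$ theorem. Consider the family of cylinder sets
$$
\mathcal{C} = \{(\pi^U_{U_\alpha})^{-1}(A) : \alpha < \lambda,\ A \in \X_{U_\alpha}\}.
$$
This family is a $\pi$-system: given two cylinders based on $U_\alpha$ and $U_\beta$ with, say, $\alpha\le\beta$, we can lift the first one along $\pi^{U_\beta}_{U_\alpha}$ and intersect it with the second inside $\X_{U_\beta}$, obtaining another cylinder in $\mathcal{C}$. By construction of the product $\sigma$-algebra on $\X_U$, $\mathcal{C}$ generates it. Any two measures $\nu,\nu'$ satisfying the defining property must assign to each cylinder the common value $\nu_\alpha(A)$, so they agree on $\mathcal{C}$ and hence, by $\pi$--$\lambda$, on all of $\X_U$.

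For \emph{existence}, I would define a set function $\nu$ on $\mathcal{C}$ by $\nu((\pi^U_{U_\alpha})^{-1}(A)) := \nu_\alpha(A)$. Well-definedness is exactly the projective property: if the same cylinder is expressed through $\alpha$ and through $\beta$ with $\alpha\le\beta$, then $\nu_\beta \circ \pi^{U_\beta}_{U_\alpha} = \nu_\alpha$ forces the two candidate values to coincide. Finite additivity on the algebra generated by $\mathcal{C}$ follows by pushing two disjoint cylinders onto a common $U_\beta$ large enough to contain the indices of both, and using additivity of $\nu_\beta$. Extending to countable additivity, and thus to a genuine measure on $\X_U$, is then an application of Kolmogorov's extension theorem in the standard Borel setting assumed throughout the appendix (Kallenberg, Thm.~6.16); this gives a measure on $\X_U$ whose pushforward under each $\pi^U_{U_\alpha}$ is $\nu_\alpha$, which is exactly the characterizing property of the limit.

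The main obstacle is the countable-additivity step in the existence part, because the index $\lambda$ may be an arbitrary limit ordinal, possibly uncountable. I would handle this by observing that any countable disjoint union of cylinders only involves countably many indices $\alpha_1<\alpha_2<\dots$; taking $\beta = \sup_n \alpha_n < \lambda$ (or, if $\lambda$ has uncountable cofinality, any single $\beta$ above all $\alpha_n$), the whole countable collection lives as cylinders over $\X_{U_\beta}$, reducing countable additivity of $\nu$ to countable additivity of the single measure $\nu_\beta$. Thus the Kolmogorov extension goes through without needing any topological hypothesis beyond the standard Borel assumption already in force, and together with the uniqueness argument the lemma follows.
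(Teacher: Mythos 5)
The paper states this lemma without proof (it is listed among results whose ``straightforward'' proofs are omitted), so there is no official argument to compare against; your route --- a cylinder-set $\pi$-system for uniqueness plus a Kolmogorov-type extension for existence --- is the standard one and is surely what the authors intend. The uniqueness half is correct as written: the cylinders $(\pi^U_{U_\alpha})^{-1}(A)$ form a $\pi$-system containing $\X_U$ and generating the product $\sigma$-algebra, so two finite measures with the prescribed pushforwards agree everywhere.

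The one genuine misstep is in your closing paragraph on countable additivity. First, the parenthetical is backwards: it is \emph{uncountable} cofinality of $\lambda$ that guarantees $\sup_n \alpha_n < \lambda$ for any countable family of indices; when $\lambda$ has countable cofinality there may be no $\beta<\lambda$ dominating all the $\alpha_n$. Second, and more importantly, the countable-cofinality case includes $\lambda=\omega$, which is the typical case in Definition~\ref{def:muU}; there, a countable disjoint union of cylinders need not live over any single $\X_{U_\beta}$, so countable additivity of $\nu$ does \emph{not} reduce to countable additivity of one $\nu_\beta$. This is exactly the situation where the topological content of Kolmogorov's extension theorem (inner regularity / a compact class in the standard Borel setting) is indispensable, contrary to your claim that the reduction lets the extension ``go through without needing any topological hypothesis.'' The lemma is nevertheless saved, because you do invoke Kolmogorov's theorem and the paper assumes standard Borel spaces throughout: define the finite-dimensional marginals $\nu_F = \nu_\alpha\circ\pi^{U_\alpha}_F$ for finite $F\subseteq U_\alpha$ (well defined and consistent by projectivity), apply the extension theorem over the index set $U$, and identify the resulting measure's pushforward to each $\X_{U_\alpha}$ with $\nu_\alpha$ by your own uniqueness argument. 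With the final paragraph corrected or simply deleted, the proof stands.
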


Fix an ordinal $\lambda$,
and suppose $\{U_\alpha\subseteq V : \alpha<\lambda\}$ is an increasing sequence of subsets of $V$,
i.e., such that if $\alpha<\beta<\lambda$ then $U_\alpha\subseteq U_\beta$.
Define $U=\bigcup_{\alpha<\lambda}U_\alpha$.
Let $\{W_\alpha\subseteq V : \alpha<\lambda\}$ and $W$ be another such sequence, supposing $U$ and $W$ are disjoint.

\begin{definition}\label{def:projective-kernels}
	$\{K_\alpha : \alpha<\lambda\}$ is a \keyword{projective sequence of kernels} from $\X_{U_\alpha}$ to $\X_{W_\alpha}$
	if whenever $\alpha<\beta<\lambda$ we have
		$\pi^{U_\beta}_{U_\alpha} \circ K_\alpha = K_\beta \circ \pi^{W_\beta}_{W_\alpha}.$
\end{definition}

\begin{definition}\label{def:kernel-limit}
	The limit $\lim_{\alpha<\beta} K_\alpha$
	of a projective sequence $\{K_\alpha : \alpha<\lambda\}$ of kernels
	is the unique kernel from $\X_U$ to $\X_W$
	such that for all $\alpha<\lambda$
		$\pi^{U}_{U_\alpha} \circ K_\alpha = \left( \lim_{\alpha<\beta} K_\alpha \right) \circ \pi^{W}_{W_\alpha}.$
\end{definition}

\begin{lemma}\label{lem:kernel-limit}
	A projective sequence of kernels has a unique limit.
\end{lemma}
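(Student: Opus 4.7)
\textbf{Proof Proposal for Lemma~\ref{lem:kernel-limit}.} The plan is to reduce the existence claim to the already-established measure-limit result \lemref{lem:measure-limit}, and to handle uniqueness by the standard $\pi$-$\lambda$ argument on cylinder sets. First I would fix $x\in X_U$, set $x_\alpha=\pi^U_{U_\alpha}(x)$, and consider the family $\{K_\alpha(x_\alpha,\cdot):\alpha<\lambda\}$ of probability measures on $\X_{W_\alpha}$. Evaluating the projective identity for kernels (\defref{def:projective-kernels}) at the point $x_\alpha$ and noting $\pi^{U_\beta}_{U_\alpha}(x_\beta)=x_\alpha$ gives, for $B\in\X_{W_\alpha}$ and $\alpha<\beta$, that $K_\alpha(x_\alpha,B)=K_\beta(x_\beta,(\pi^{W_\beta}_{W_\alpha})^{-1}(B))$, i.e., the sequence is a projective sequence of measures. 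By \lemref{lem:measure-limit} it has a unique limit measure, which I call $K(x,\cdot)$ on $\X_W$; by construction $\pi^U_{U_\alpha}\circ K_\alpha = K\circ\pi^W_{W_\alpha}$ evaluated at $x$ for every $\alpha$, which is exactly \defref{def:kernel-limit}.

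The main obstacle is verifying that the pointwise-defined $K$ is a probability kernel, specifically that $x\mapsto K(x,B)$ is measurable for every $B\in\X_W$. I would argue this first on the $\pi$-system of cylinder sets of the form $B=(\pi^W_{W_\alpha})^{-1}(C)$ with $C\in\X_{W_\alpha}$: on such sets $K(x,B)=K_\alpha(x_\alpha,C)$, which is measurable in $x$ by composition of the kernel measurability of $K_\alpha$ with the measurable projection $\pi^U_{U_\alpha}$. Cylinder sets generate the product $\sigma$-algebra $\X_W$ and form a $\pi$-system, so I would apply the monotone class theorem (equivalently Dynkin's $\pi$-$\lambda$ lemma) to the collection of $B\in\X_W$ for which $x\mapsto K(x,B)$ is measurable; this collection is a $\lambda$-system because probability measures are countably additive and finite, so it coincides with $\X_W$. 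Together with the fact that each $K(x,\cdot)$ is a probability measure by construction, this shows $K$ is a genuine probability kernel.

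For uniqueness, suppose $K$ and $K'$ are both limits. For every cylinder $B=(\pi^W_{W_\alpha})^{-1}(C)$ the defining identity forces $K(x,B)=K_\alpha(x_\alpha,C)=K'(x,B)$ for all $x$. Hence the measures $K(x,\cdot)$ and $K'(x,\cdot)$ agree on the generating $\pi$-system, and by Dynkin's uniqueness theorem they agree on all of $\X_W$, so $K=K'$. Combining the three steps gives the existence and uniqueness of the limit kernel, completing the proof plan.
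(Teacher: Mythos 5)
Your proposal is correct. The paper gives no proof of this lemma (it is listed under ``Additional Technical Details'' with the remark that the proofs are omitted as straightforward), so there is nothing to compare against line by line, but your argument is the natural one the authors evidently intend: reduce existence pointwise to \lemref{lem:measure-limit} by observing that $\{K_\beta(x_\beta,\cdot)\}$ is a projective sequence of measures for each fixed $x$, then recover kernel measurability and uniqueness via Dynkin's $\pi$-$\lambda$ argument on the cylinder sets $(\pi^W_{W_\alpha})^{-1}(C)$, which form a generating $\pi$-system because the $W_\alpha$ are increasing. The only dependency worth flagging is that you inherit whatever regularity assumptions make \lemref{lem:measure-limit} itself true, but since that lemma is asserted by the paper, invoking it is legitimate and your proof is complete.
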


\begin{lemma}\label{lem:product-exchange}
Let $\X_1,\Y_1,\X_2,\Y_2$ be measurable spaces,
	$\mu$ be a measure on $\X_1$,
	$K$ a kernel from $\X_2$ to $\Y_1$,
	$f\maps\X_1\to\X_2$ a measurable function,
	and $g\maps\Y_1\to\Y_2$ a measurable function.
Then:
	$(\mu \otimes (f \circ K)) \circ (f \times g) = (\mu \circ f) \otimes (K \circ g)$
where $f \times g$ is the measurable function mapping $(x,y)$ to $((f(x), g(y))$.
\end{lemma}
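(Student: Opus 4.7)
The plan is to verify the identity by evaluating both sides on an arbitrary measurable set $B\in\X_2\otimes\Y_2$ and simplifying the resulting integrals using the definition of $\otimes$ (Def.~\ref{def:measure-kernel-product}), the kernel composition formula \eqref{eq:kernel-composition}, and the standard convention that a measurable function $f$ is identified with the deterministic kernel $K_f(x,\cdot)=\delta_{f(x)}$. Both sides are measures on $\X_2\times\Y_2$, so it suffices to show equality on a generating $\pi$-system of rectangles (or equivalently, on all measurable sets via a monotone class argument).

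First I would expand the LHS. By the pushforward definition $(\mu\otimes(f\circ K))\circ(f\times g)(B) = (\mu\otimes(f\circ K))((f\times g)^{-1}(B))$, and then Def.~\ref{def:measure-kernel-product} gives
\[
\int \mu(dx)\int (f\circ K)(x,dy)\,1_B(f(x),g(y)).
\]
Using that $f$ as a kernel is a point mass and applying \eqref{eq:kernel-composition}, one has $(f\circ K)(x,A)=K(f(x),A)$, so the LHS reduces to
\[
\int \mu(dx)\int K(f(x),dy)\,1_B(f(x),g(y)).
\]

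Next I would expand the RHS. By Def.~\ref{def:measure-kernel-product}, $((\mu\circ f)\otimes(K\circ g))(B)=\int (\mu\circ f)(dx')\int (K\circ g)(x',dy')\,1_B(x',y')$. Two applications of the standard change-of-variables formula for pushforward measures collapse this to the same expression: the outer $\int(\mu\circ f)(dx')\,h(x')=\int\mu(dx)\,h(f(x))$, and the inner $(K\circ g)(f(x),\cdot)$ is the pushforward of $K(f(x),\cdot)$ under $g$, so $\int (K\circ g)(f(x),dy')\,1_B(f(x),y')=\int K(f(x),dy)\,1_B(f(x),g(y))$. The two sides now match.

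The main obstacle is not conceptual but notational: the symbol $\circ$ is overloaded across functions, measures, and kernels, and one must apply the right specialization at each step. To keep this clean I would briefly unpack the three cases at the start — $K_f$ for a function, $K_\mu$ for a measure, and $\eqref{eq:kernel-composition}$ otherwise — and then the computation proceeds as a direct unfolding of definitions with no measurability or $\sigma$-finiteness subtleties beyond those already built into the hypotheses on $\mu$, $K$, $f$, and $g$.
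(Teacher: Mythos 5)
Your computation is correct, and it is precisely the ``straightforward'' unfolding of definitions that the paper alludes to: this lemma appears in the appendix among the results presented explicitly \emph{without} their proofs, so there is no paper proof to diverge from. Both sides reduce to $\int \mu(dx)\int K(f(x),dy)\,1_B(f(x),g(y))$ exactly as you show, with the only care needed being the overloaded $\circ$ (function-as-kernel, measure-as-kernel, and genuine kernel composition), which you handle correctly.
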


\begin{lemma}\label{lem:measure-kernel-product-limit}
	Let $\nu_\alpha$ and $K_\alpha$ be as in Lemmas~\ref{lem:measure-limit} and~\ref{lem:kernel-limit}.
	Then
$		\lim_{\alpha<\lambda} (\nu_\alpha \otimes K_\alpha)
		= (\lim_{\alpha<\lambda} \nu_\alpha) \otimes (\lim_{\alpha<\lambda} K_\alpha).$
\end{lemma}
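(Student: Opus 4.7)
\textbf{Proof proposal for Lemma~\ref{lem:measure-kernel-product-limit}.}
The plan is to show that $\nu\otimes K$ satisfies the defining property of the limit on the left-hand side, and then conclude by the uniqueness guaranteed by Lemma~\ref{lem:measure-limit}. First I would check that $\{\nu_\alpha\otimes K_\alpha : \alpha<\lambda\}$ is a projective sequence of measures on the spaces $\X_{U_\alpha\cup W_\alpha}=\X_{U_\alpha}\times\X_{W_\alpha}$ (using disjointness of $U$ and $W$). Given $\alpha<\beta$, the projection $\pi^{U_\beta\cup W_\beta}_{U_\alpha\cup W_\alpha}$ factors as $\pi^{U_\beta}_{U_\alpha}\times\pi^{W_\beta}_{W_\alpha}$, and the projectivity of $\{\nu_\alpha\}$ and $\{K_\alpha\}$ together with Lemma~\ref{lem:product-exchange} gives
$(\nu_\beta\otimes K_\beta)\circ(\pi^{U_\beta}_{U_\alpha}\times\pi^{W_\beta}_{W_\alpha})=\nu_\alpha\otimes K_\alpha$.
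So the limit $\lim_{\alpha<\lambda}(\nu_\alpha\otimes K_\alpha)$ is well-defined as the unique measure on $\X_{U\cup W}$ whose projection onto each $\X_{U_\alpha\cup W_\alpha}$ equals $\nu_\alpha\otimes K_\alpha$.

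Next I would verify that $\nu\otimes K$, with $\nu=\lim_\alpha\nu_\alpha$ and $K=\lim_\alpha K_\alpha$, also satisfies this defining property, i.e.,
$(\nu\otimes K)\circ\pi^{U\cup W}_{U_\alpha\cup W_\alpha}=\nu_\alpha\otimes K_\alpha$
for each $\alpha<\lambda$. I would verify this on measurable rectangles $B_1\times B_2$ with $B_1\in\X_{U_\alpha}$, $B_2\in\X_{W_\alpha}$ and then invoke the uniqueness of measures extended from the generating $\pi$-system to conclude equality on all measurable sets. The computation on a rectangle unfolds as
\begin{align*}
&((\nu\otimes K)\circ\pi^{U\cup W}_{U_\alpha\cup W_\alpha})(B_1\times B_2)\\
&=\int\nu(dx)\,1_{B_1}(\pi^U_{U_\alpha}(x))\,(K\circ\pi^W_{W_\alpha})(x,B_2)\\
&=\int\nu(dx)\,1_{B_1}(\pi^U_{U_\alpha}(x))\,(\pi^U_{U_\alpha}\circ K_\alpha)(x,B_2)\\
&=\int\nu_\alpha(dz)\,1_{B_1}(z)\,K_\alpha(z,B_2)=(\nu_\alpha\otimes K_\alpha)(B_1\times B_2),
\end{align*}
where the first step is Fubini together with the factoring of the projection, the second uses the defining identity $K\circ\pi^W_{W_\alpha}=\pi^U_{U_\alpha}\circ K_\alpha$ for the limit kernel (Def.~\ref{def:kernel-limit}), and the third uses the defining identity $\nu\circ\pi^U_{U_\alpha}=\nu_\alpha$ for the limit measure (Def.~\ref{def:measure-limit}) as a change of variables. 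A cleaner alternative is to apply Lemma~\ref{lem:product-exchange} directly with $\mu=\nu$, $f=\pi^U_{U_\alpha}$, the inner kernel $K_\alpha$, and $g=\mathrm{id}_{\X_{W_\alpha}}$, after first rewriting $\nu\otimes K$ using the limit-kernel identity.

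Having shown $\nu\otimes K$ satisfies the defining property that uniquely characterizes $\lim_{\alpha<\lambda}(\nu_\alpha\otimes K_\alpha)$, uniqueness (Lemma~\ref{lem:measure-limit}) forces $\lim_{\alpha<\lambda}(\nu_\alpha\otimes K_\alpha)=\nu\otimes K$, which is exactly the claim.

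The main obstacle I anticipate is the bookkeeping in the rectangle computation: one must be careful that the projection of the product space factors as a product of projections on the coordinates (which uses disjointness of $U$ and $W$), that the inner integral against $K$ can be rewritten via the limit-kernel identity without disturbing the outer integral against $\nu$, and that the change of variables from $\nu$ to $\nu_\alpha$ is legitimate precisely because of the defining property of the limit measure. Once these ingredients are aligned, extending from rectangles to all of $\X_{U_\alpha}\otimes\X_{W_\alpha}$ is routine via a $\pi$–$\lambda$ (or monotone class) argument.
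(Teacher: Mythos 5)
The paper states this lemma without proof (it is listed under ``Additional Technical Details'' with the remark that the proofs are straightforward and omitted for space), so there is no authors' argument to compare against; your proof is correct and is the natural one: characterize the limit via Definitions~\ref{def:measure-limit} and~\ref{def:kernel-limit}, check the defining projection identity on measurable rectangles using $K\circ\pi^{W}_{W_\alpha}=\pi^{U}_{U_\alpha}\circ K_\alpha$ and $\nu\circ\pi^{U}_{U_\alpha}=\nu_\alpha$, and extend by a $\pi$--$\lambda$ argument (legitimate here since all measures in play are probability measures) before invoking uniqueness. The only cosmetic point is that Lemma~\ref{lem:product-exchange} does not apply in one shot to $(\nu\otimes K)\circ(\pi^{U}_{U_\alpha}\times\pi^{W}_{W_\alpha})$ because $K$ is not literally of the form $\pi^{U}_{U_\alpha}\circ K'$; one first composes with $\mathrm{id}\times\pi^{W}_{W_\alpha}$ to pull the $W$-projection into the kernel and only then factors through $\pi^{U}_{U_\alpha}$ --- exactly the two-step bookkeeping your explicit rectangle computation already carries out correctly.
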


\begin{lemma}\label{lem:otimes-prod}
	$\mu$ measure on $\X$,
	$K_1$ a kernel from $\X$ to $\Y_1$,
	$K_2$ a kernel from $\X$ to $\Y_2$, 
		$\mu \otimes K_1 \otimes (\pi^{\X\times\Y_1}_{\X} \circ K_2)
		= \mu \otimes \prod_{i=1,2} K_i$.
	where by abuse of notation $\pi^{\X\times\Y_1}_{\X}$ denotes the projection from $\X\times\Y_1$ to $\X$.
\end{lemma}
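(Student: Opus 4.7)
The plan is to prove equality of these two measures on $\X \times \Y_1 \times \Y_2$ by showing they agree on the $\pi$-system of measurable rectangles $A \times B_1 \times B_2$ with $A \in \X$, $B_1 \in \Y_1$, $B_2 \in \Y_2$. Since this $\pi$-system generates the product $\sigma$-algebra, a standard monotone class (or Dynkin $\pi$-$\lambda$) argument, together with $\sigma$-finiteness of both sides inherited from $\mu$ and the kernel values lying in $[0,1]$, extends the equality from rectangles to all measurable sets.

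First, I would unfold the left-hand side on such a rectangle. By Def.~\ref{def:measure-kernel-product} applied twice,
\[
	(\mu \otimes K_1 \otimes (\pi^{\X\times\Y_1}_{\X} \circ K_2))(A \times B_1 \times B_2) = \int_A \mu(dx) \int_{B_1} K_1(x, dy_1)\, (\pi^{\X\times\Y_1}_{\X} \circ K_2)((x,y_1), B_2).
\]
Using the kernel-composition formula~\eqref{eq:kernel-composition} (or its reduction for measurable functions treated as kernels), $(\pi^{\X\times\Y_1}_{\X} \circ K_2)((x,y_1), B_2) = K_2(\pi^{\X\times\Y_1}_{\X}(x,y_1), B_2) = K_2(x, B_2)$. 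Since the integrand no longer depends on $y_1$, this collapses to $\int_A \mu(dx)\, K_1(x, B_1) K_2(x, B_2)$.

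Next, I would unfold the right-hand side. By Def.~\ref{def:kernel}, the kernel $\prod_{i=1,2} K_i$ sends $x$ to the product measure $K_1(x,\cdot) \otimes K_2(x,\cdot)$ on $\Y_1 \times \Y_2$, which on the rectangle $B_1 \times B_2$ evaluates to $K_1(x, B_1) K_2(x, B_2)$. Then Def.~\ref{def:measure-kernel-product} gives
\[
	\Bigl(\mu \otimes \prod_{i=1,2} K_i\Bigr)(A \times B_1 \times B_2) = \int_A \mu(dx)\, K_1(x, B_1) K_2(x, B_2),
\]
matching the LHS.

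The argument is essentially bookkeeping rather than containing a deep idea; the main subtlety will be justifying that the rectangles are a genuine determining class for the product measure and handling the identification $(\X \times \Y_1) \times \Y_2 \cong \X \times (\Y_1 \times \Y_2)$ without notational confusion. Measurability of $x \mapsto K_1(x, B_1) K_2(x, B_2)$ (required to give sense to the final integral) follows directly from the definition of a probability kernel, so Fubini-type iterated integration applies on both sides without issue.
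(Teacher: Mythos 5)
Your argument is correct. The paper itself gives no proof of this lemma (it is listed under ``Additional Technical Details'' with the remark that the proofs are straightforward and omitted for space), so there is nothing to compare against; your computation on measurable rectangles followed by a $\pi$-$\lambda$ extension is exactly the standard argument the authors presumably had in mind, and the two unfoldings via Definitions~\ref{def:measure-kernel-product} and~\ref{def:kernel} both correctly reduce to $\int_A \mu(dx)\,K_1(x,B_1)K_2(x,B_2)$.
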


\begin{lemma}\label{lem:prod-prod}
	If $K_{i,j}$ are kernels from $\X$ to $\Y_{i,j}$ then
		$\prod_{i} \prod_j K_{i,j} = \prod_{i,j} K_{i,j}$.
\end{lemma}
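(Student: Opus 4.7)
The plan is to evaluate both sides at an arbitrary but fixed $x\in\X$ and show the resulting measures on the (isomorphic) product spaces coincide, by a standard $\pi$-system uniqueness argument.

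First I would pin down the state spaces. By Def.~\ref{def:kernel}, for each fixed $x\in\X$ the inner product $\left(\prod_{j} K_{i,j}\right)(x,\cdot)$ is the infinite product measure $\bigotimes_{j} K_{i,j}(x,\cdot)$ on $\prod_{j}\Y_{i,j}$; then the outer product $\left(\prod_{i}\prod_{j} K_{i,j}\right)(x,\cdot)$ is the infinite product measure $\bigotimes_{i}\bigotimes_{j} K_{i,j}(x,\cdot)$ on $\prod_{i}\prod_{j}\Y_{i,j}$. On the other hand $\left(\prod_{i,j} K_{i,j}\right)(x,\cdot) = \bigotimes_{(i,j)} K_{i,j}(x,\cdot)$ lives on $\prod_{(i,j)}\Y_{i,j}$. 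The canonical bijection identifying the two product spaces as measurable spaces is the obvious ``flattening'' map, which is a measurable isomorphism with respect to the product $\sigma$-algebras generated by cylinder sets; I would state this identification explicitly and use it to compare the two measures on a common space.

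Next, I would use the fact that a probability measure on a product $\sigma$-algebra is determined by its values on the $\pi$-system of measurable rectangles (cylinder sets with finitely many constrained coordinates). For any such rectangle $\prod_{(i,j)} B_{i,j}$ with only finitely many $B_{i,j}\ne\Y_{i,j}$, both sides evaluate directly from the definition of infinite product measures (Kallenberg 1.27, 6.18) to the finite product $\prod_{(i,j): B_{i,j}\ne\Y_{i,j}} K_{i,j}(x,B_{i,j})$: for the flat product this is immediate, while for the iterated product one applies the definition of $\bigotimes_i$ on a rectangle of shape $\prod_i A_i$ with $A_i = \prod_j B_{i,j}$, whose inner-product mass is itself the finite product $\prod_{j: B_{i,j}\ne\Y_{i,j}} K_{i,j}(x,B_{i,j})$. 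The two finite products agree since associativity of multiplication in $[0,1]$ does not depend on grouping.

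Having shown the two measures agree on the $\pi$-system of rectangles, uniqueness (Dynkin's $\pi$--$\lambda$ theorem) gives equality on the full product $\sigma$-algebra, for every $x\in\X$. Finally, since Def.~\ref{def:kernel} already guarantees that each side, viewed as a function of $(x,B)$, is a kernel (so the measurability in $x$ is automatic), pointwise-in-$x$ equality of the measures yields equality of kernels, completing the proof. The only mildly technical point is the measurable identification between $\prod_i\prod_j\Y_{i,j}$ and $\prod_{i,j}\Y_{i,j}$; everything else is routine application of the uniqueness of infinite product measures.
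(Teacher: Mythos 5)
Your proof is correct. The paper states Lemma~\ref{lem:prod-prod} without proof (it appears in the appendix among results ``presented without their (straightforward) proofs''), and your argument---fixing $x\in\X$, identifying the iterated and flat product spaces via the measurable flattening isomorphism, checking that both product measures agree on the generating $\pi$-system of rectangles with finitely many constrained coordinates, invoking Dynkin's theorem for uniqueness, and noting that measurability in $x$ is already guaranteed by each side being a kernel---is exactly the standard argument the authors are implicitly relying on.
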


\begin{lemma}\label{lem:product-compose}
	If $f\maps\X'\to\X$ and $K_i$ are kernels from $\X$ to $\Y_i$ then
		$f\circ\prod_i K_i = \prod_i f\circ K_i$.
\end{lemma}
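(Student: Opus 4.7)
The plan is to unwind both sides to pointwise evaluations on $\X'$ and then invoke the definition of the infinite product of kernels. Recall that the function $f\maps\X'\to\X$ is regarded as the kernel $K_f(x',B)=1(f(x')\in B)$, and kernel composition is given by \eqref{eq:kernel-composition}. The first step is to compute, for any $x'\in\X'$ and measurable $C\in\prod_i\Y_i$,
\[
	(f\circ\textstyle\prod_i K_i)(x',C)
	= \int K_f(x',dx)\,(\textstyle\prod_i K_i)(x,C)
	= (\textstyle\prod_i K_i)(f(x'),C),
\]
so that composing with $f$ on the left of any kernel amounts to evaluating the source point at $f(x')$. By \defref{def:kernel}, the right-hand side equals $\bigl(\bigotimes_i K_i(f(x'),\cdot)\bigr)(C)$.

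Next I would compute the right-hand side of the claimed identity the same way: for each $i$, $(f\circ K_i)(x',\cdot)=K_i(f(x'),\cdot)$ by the previous paragraph applied to each $K_i$ individually. Hence, again by \defref{def:kernel},
\[
	\bigl(\textstyle\prod_i (f\circ K_i)\bigr)(x',\cdot)
	= \bigotimes_i (f\circ K_i)(x',\cdot)
	= \bigotimes_i K_i(f(x'),\cdot).
\]
Comparing with the first display, both sides equal the product measure $\bigotimes_i K_i(f(x'),\cdot)$ at every $x'\in\X'$, which establishes the identity as kernels from $\X'$ to $\prod_i\Y_i$.

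The only subtlety is the well-definedness of the infinite product when the index set is uncountable; however, this is exactly what is granted in the discussion following \defref{def:kernel} (citing \cite{kallenberg} 1.27 and 6.18), and the resulting product measure on $\prod_i\Y_i$ is unique once each marginal $K_i(f(x'),\cdot)$ is specified. Since both sides produce the same family of marginals at every $x'$, uniqueness of the product measure completes the argument. The hard part, if any, is purely bookkeeping: making sure that joint measurability in $x'$ (required for the result to be a genuine kernel) is preserved, but this follows because the product-of-kernels construction yields a kernel whenever each factor is a kernel, and $x'\mapsto K_i(f(x'),B)$ is measurable as the composition of the measurable map $f$ with the measurable map $x\mapsto K_i(x,B)$.
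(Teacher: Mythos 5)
Your proof is correct and is essentially the intended argument: the paper omits the proof of Lemma~\ref{lem:product-compose} as ``straightforward,'' and the natural route is exactly your unwinding of $(f\circ K)(x',\cdot)=K(f(x'),\cdot)$ (which the paper itself records right after \eqref{eq:kernel-composition}) followed by applying \defref{def:kernel} to see that both sides assign to each $x'$ the same product measure $\bigotimes_i K_i(f(x'),\cdot)$. Your remarks on well-definedness of the infinite product and on measurability in $x'$ are accurate and cover the only points that need care.
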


\begin{lemma}\label{lem:product-projection}
	If $K_v$ for $v\in U$ are kernels from $\X$ to $\X_v$, and $W\subseteq U$ then
		$\left(\prod_{v\in U} K_v\right) \circ \pi^U_W = \prod_{v\in W} K_v$.
\end{lemma}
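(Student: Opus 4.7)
The plan is to verify the identity pointwise in the input $x\in\X$, reducing it to a statement about pushforwards of infinite product measures. Fix $x\in\X$ and write $\mu_x = \prod_{v\in U} K_v(x,\cdot)$ on $\X_U$ and $\nu_x = \prod_{v\in W} K_v(x,\cdot)$ on $\X_W$. Treating the measurable function $\pi^U_W$ as the kernel $y\mapsto \indict(\pi^U_W(y)\in\cdot)$ and applying the composition formula~\eqref{eq:kernel-composition}, for any $B\in\X_W$ we obtain
$$\bigl((\textstyle\prod_{v\in U} K_v)\circ \pi^U_W\bigr)(x,B) \;=\; \mu_x\bigl((\pi^U_W)^{-1}(B)\bigr).$$
Hence it suffices to show that the pushforward $\mu_x \circ (\pi^U_W)^{-1}$ coincides with $\nu_x$ as measures on $\X_W$.

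I would next verify this equality on the generating $\pi$-system of finite cylinders. For any finite $S\subseteq W$ and any measurable sets $B_v\in\X_v$ for $v\in S$, set $C = \prod_{v\in S} B_v \times \prod_{v\in W\setminus S}\X_v$. Its preimage under $\pi^U_W$ is the corresponding cylinder in $\X_U$, namely $\prod_{v\in S} B_v \times \prod_{v\in U\setminus S}\X_v$. By the defining property of the infinite product of probability measures (see \cite{kallenberg}, 1.27 and 6.18), both $\mu_x((\pi^U_W)^{-1}(C))$ and $\nu_x(C)$ evaluate to $\prod_{v\in S} K_v(x, B_v)$, since each omitted factor contributes $K_v(x,\X_v) = 1$.

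Finally, since finite cylinders form a $\pi$-system generating the product $\sigma$-algebra on $\X_W$, and both $\mu_x \circ (\pi^U_W)^{-1}$ and $\nu_x$ are probability measures agreeing on this $\pi$-system, the standard uniqueness-of-extension theorem forces them to agree on all of $\X_W$. As $x\in\X$ was arbitrary and $B\in\X_W$ was arbitrary, the two kernels are equal. The only place where care is needed is the passage from cylinder sets to the full product $\sigma$-algebra; this is handled by a routine Dynkin $\pi$-$\lambda$ argument rather than by anything specific to the MTBN framework, so the proof is essentially just an unpacking of the definitions of infinite product kernels and of kernel composition with a measurable function.
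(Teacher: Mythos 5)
Your proof is correct and is precisely the ``straightforward'' argument the paper omits (it states Lemma~\ref{lem:product-projection} without proof): reduce kernel composition with a projection to a pushforward via the identity $(K\circ g)(x,B)=K(x,g^{-1}(B))$, check agreement on finite cylinders using the defining property of infinite product measures, and conclude by the $\pi$-$\lambda$ uniqueness theorem. Nothing is missing.
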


\begin{lemma}\label{lem:importance}
	Let $(X,\X)$ be a measurable space, $X,X_1,X_2,\dots$ an iid random sequence on $\X$,
	and $w(x)$ be non-negative real-valued function of $(X,\X)$.
	Then
$		\frac{\sum_{i=1}^n w(X_i) f(X_i) }{\sum_{i=1}^n w(X_i)}
	\stackrel{\rm a.s.}{\to}
		\frac{\E w(X) f(X)}{\E w(X)}$.
\end{lemma}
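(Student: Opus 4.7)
The plan is to reduce the claim to two independent applications of the strong law of large numbers (SLLN), combined by a continuity argument for division. First I would divide both the numerator and the denominator of the displayed ratio by $n$, rewriting it as $\frac{(1/n)\sum_{i=1}^n w(X_i) f(X_i)}{(1/n)\sum_{i=1}^n w(X_i)}$, so that each sum becomes a standard empirical mean of i.i.d.\ samples.

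Next I would apply Kolmogorov's SLLN twice. Since $X_1,X_2,\dots$ are i.i.d., the sequences $\{w(X_i)f(X_i)\}_i$ and $\{w(X_i)\}_i$ are each i.i.d. Under the implicit integrability conditions $\E|w(X)f(X)|<\infty$ and $\E\, w(X)<\infty$, the SLLN yields $(1/n)\sum_{i=1}^n w(X_i)f(X_i)\to \E[w(X)f(X)]$ almost surely and $(1/n)\sum_{i=1}^n w(X_i)\to \E[w(X)]$ almost surely. Intersecting the two full-measure events gives a single event on which both empirical means converge to their expectations.

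Finally, on this full-measure event I would invoke the continuity of the map $(a,b)\mapsto a/b$ at every point with $b\neq 0$: provided $\E\, w(X) > 0$, the ratio of the empirical means converges to $\E[w(X)f(X)]/\E[w(X)]$, which is exactly the claimed limit. This last step is just the continuous mapping theorem applied in the a.s.\ sense, or equivalently the elementary fact that ratios of convergent real sequences converge to the ratio of the limits when the denominator limit is nonzero.

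The main obstacle is essentially bookkeeping rather than substance: the lemma as stated does not spell out the moment hypotheses $\E|w(X)f(X)|<\infty$, $\E\, w(X)<\infty$, and $\E\, w(X)>0$, so I would need to make these explicit (they are implicit in every downstream use of the lemma, including in the proof of Theorem~\ref{thm:irlw-convergence}, where the weight $w(x)=P(\alpha_n(Y)|X=x)$ is a bounded non-negative probability). Once these standing hypotheses are in place, the proof is a two-line consequence of the SLLN and continuity of division.
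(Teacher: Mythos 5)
Your proof is correct and is exactly the ``straightforward'' argument the paper omits (Lemma~\ref{lem:importance} is stated in the appendix without proof): divide both sums by $n$, apply the strong law of large numbers to the i.i.d.\ sequences $\{w(X_i)f(X_i)\}$ and $\{w(X_i)\}$, and conclude by continuity of division on the intersection of the two almost-sure events. Your observation that the lemma silently requires $\E|w(X)f(X)|<\infty$, $\E\,w(X)<\infty$, and $\E\,w(X)>0$ is accurate and worth recording, since the statement as printed does not include these hypotheses even though they hold in every downstream application.
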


\begin{lemma}\label{lem:likelihood-weighting}
 	For any measurable set $E$ and measurable function $f(x)$: $\frac{\E P(E|X) f(X)}{\E P(E|X)} = \E[f(X)|E].$
\end{lemma}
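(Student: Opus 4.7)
The plan is to reduce both the numerator and denominator to unconditional expectations involving the indicator $\indict_E$, using the tower property of conditional expectation, and then recognize the resulting ratio as the definition of conditional expectation given an event.

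First I would rewrite $P(E|X) = \E[\indict_E \mid X]$, since conditional probability of an event given a random variable is by definition the conditional expectation of its indicator. For the numerator, since $f(X)$ is $\sigma(X)$-measurable, I can pull it inside the conditional expectation via the ``taking out what is known'' property:
\[
\E\bigl[P(E|X)\,f(X)\bigr] = \E\bigl[\E[\indict_E \mid X]\,f(X)\bigr] = \E\bigl[\E[\indict_E f(X) \mid X]\bigr] = \E[\indict_E f(X)],
\]
where the last step is the tower property. For the denominator, the same reasoning with $f\equiv 1$ gives
\[
\E[P(E|X)] = \E[\E[\indict_E\mid X]] = \E[\indict_E] = \P(E).
\]

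Combining these two identities yields
\[
\frac{\E P(E|X) f(X)}{\E P(E|X)} = \frac{\E[\indict_E f(X)]}{\P(E)},
\]
which, whenever $\P(E)>0$, is precisely the definition of $\E[f(X)\mid E]$ (the expectation of $f(X)$ under the conditional measure $\P(\,\cdot\mid E) = \P(\,\cdot\cap E)/\P(E)$).

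There is essentially no obstacle: the argument is a two-line application of the tower property plus the definition of conditioning on an event. The only caveat is the standard assumption $\P(E)>0$ needed for the ratio and for $\E[f(X)\mid E]$ to be well-defined, and the mild integrability assumption that $\E|f(X)|\indict_E < \infty$ so the expectations exist; both are implicit in the statement's use of the lemma inside the consistency proofs for IRLW and LLW.
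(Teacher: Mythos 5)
Your proof is correct, and it is exactly the standard tower-property argument the paper has in mind: the appendix states this lemma among those presented ``without their (straightforward) proofs,'' so there is no written proof to diverge from, and your two identities $\E[P(E|X)f(X)] = \E[\indict_E f(X)]$ and $\E[P(E|X)] = \P(E)$ followed by the definition of $\E[f(X)\mid E]$ are the intended straightforward route. Your added caveats ($\P(E)>0$ and integrability of $\indict_E f(X)$) are appropriate and indeed implicitly assumed where the lemma is used in the IRLW consistency argument.
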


\end{document}